\def\rR{{\mathbb{R}}}
\newcommand{\mA}{\mathcal{A}}
\newcommand{\mY}{\mathcal{Y}}
\newcommand{\e}{\boldsymbol{e}}
\newcommand{\1}{\boldsymbol{1}}
\newcommand{\w}{\boldsymbol{w}}
\newcommand{\hf}{\hat f}
\newcommand{\x}{\boldsymbol{x}}
\newcommand{\y}{\boldsymbol{y}}
\newcommand{\z}{\boldsymbol{z}}
\newcommand{\s}{\boldsymbol{s}}
\newcommand{\0}{\boldsymbol{0}}
\newcommand{\ba}{\boldsymbol{a}}
\newcommand{\g}{\boldsymbol{g}}
\newcommand{\bv}{\boldsymbol{v}}
\newcommand{\hell}{\hat \ell}
\newcommand{\bd}{\boldsymbol{d}}
\newcommand{\bu}{\boldsymbol{u}}
\newcommand{\bq}{\boldsymbol{q}}
\newcommand{\bg}{\boldsymbol{g}}
\newcommand{\hbg}{\hat \bg}
\newcommand{\hH}{\hat H}
\newcommand{\hs}{\hat s}
\newcommand{\hy}{\hat y}
\newcommand{\hB}{\hat B}
\newcommand{\mN}{\mathcal{N}}
\newcommand{\T}{\mathcal{T}}
\newcommand{\mZ}{\mathcal{Z}}
\newcommand{\proj}{\mathbf{proj}}
\newcommand{\Y}{\mathcal{Y}}
\newcommand{\X}{\mathcal{X}}
\newtheorem{definition}{Definition}
\newtheorem{lemma}{Lemma}
\newtheorem{theorem}{Theorem}
\newtheorem{proposition}{Proposition}
\title{Proximal Quasi-Newton for Computationally Intensive $\ell_1$-regularized $M$-estimators}
\author{
Kai Zhong  \textsuperscript{1} \quad \quad Ian E.H. Yen  \textsuperscript{2}\quad \quad Inderjit S. Dhillon  \textsuperscript{2}\quad  \quad Pradeep Ravikumar  \textsuperscript{2}\\
\textsuperscript{1}  Institute for Computational Engineering \& Sciences \quad \textsuperscript{2}  Department of Computer Science \\
 University of Texas at Austin\\
  \texttt{zhongkai@ices.utexas.edu}, \texttt{\{ianyen,inderjit,pradeepr\}@cs.utexas.edu} \\
}
\begin{document}

\maketitle

\begin{abstract}
We consider the class of optimization problems arising from computationally intensive  $\ell_1$-regularized $M$-estimators, where the function or gradient values are very expensive to compute. A particular instance of interest is the $\ell_1$-regularized MLE for learning Conditional Random Fields (CRFs), which are a popular class of statistical models for varied structured prediction problems such as sequence labeling, alignment, and classification with label taxonomy. $\ell_1$-regularized MLEs for CRFs are particularly expensive to optimize since computing the gradient values requires an expensive inference step. In this work, we propose the use of a carefully constructed proximal quasi-Newton algorithm for such computationally intensive $M$-estimation problems, where we employ an aggressive active set selection technique. In a key contribution of the paper, we show that the proximal quasi-Newton method is provably \emph{super-linearly convergent}, even in the absence of strong convexity, by leveraging a restricted variant of strong convexity. In our experiments, the proposed algorithm converges considerably faster than current state-of-the-art on the problems of sequence labeling and hierarchical classification.
\end{abstract}

\section{Introduction}
\vspace{-5pt}
$\ell_1$-regularized $M$-estimators have attracted considerable interest in recent years due to their ability to fit large-scale statistical models, where the underlying model parameters are sparse. The optimization problem underlying these $\ell_1$-regularized $M$-estimators takes the form:
\begin{equation}\label{generalObj}
\min_{\w} f(\w) := \lambda \| \w \|_1 + \ell (\w),
\end{equation}
where $\ell(\w)$ is a convex differentiable loss function. In this paper, we are particularly interested in the case where the function or gradient values are very expensive to compute; we refer to these functions as computationally intensive functions, or \textbf{CI} functions in short.
A particular case of interest are $\ell_1$-regularized MLEs for Conditional Random Fields (CRFs), where computing the gradient requires an expensive inference step.

There has been a line of recent work on computationally efficient methods for solving \eqref{generalObj}, including \cite{ lhac, l1comparison, glmnet, l1sgd, owlqn, pqn}. It has now become well understood that it is key to leverage the sparsity of the optimal solution by  maintaining sparse intermediate iterates~\cite {lhac, quic,l1comparison}.
%
%
%
Coordinate Descent (CD) based methods, like CDN \cite{l1comparison}, maintain the sparsity of intermediate iterates by focusing on an active set of working variables. A caveat with such methods is that, for CI functions, each coordinate update typically requires a call of inference oracle to evaluate partial derivative for single coordinate. One approach adopted in \cite{bcd} to address this is using Blockwise Coordinate Descent that updates a block of variables at a time by ignoring the second-order effect, which however sacrifices the convergence guarantee. Newton-type methods have also attracted a surge of interest in recent years \cite{quic, glmnet}, but these require computing the exact Hessian or Hessian-vector product, which is very expensive for CI functions. This then suggests the use of quasi-Newton methods, popular instances of which include OWL-QN~\cite{owlqn}, which is adapted from $\ell_2$-regularized L-BFGS, as well as Projected Quasi-Newton (PQN) \cite{pqn}. A key caveat with OWL-QN and PQN however is that they do not exploit the sparsity of the underlying solution. In this paper, we consider the class of \emph{Proximal Quasi-Newton} (Prox-QN) methods, which we argue seem particularly well-suited to such CI functions, for the following three reasons. Firstly, it requires gradient evaluations only once in each outer iteration. Secondly, it is a second-order method, which has asymptotic superlinear convergence. Thirdly, it can employ some active-set strategy to reduce the time complexity from $O(d)$ to $O(nnz)$, where $d$ is the number of parameters and $nnz$ is the number of non-zero parameters.

While there has been some recent work on Prox-QN algorithms~\cite{lhac, newtontype}, we carefully construct an implementation that is particularly suited to CI $\ell_1$-regularized $M$-estimators.
We carefully maintain the sparsity of intermediate iterates, and at the same time reduce the  gradient evaluation time. A key facet of our approach is our aggressive active set selection (which we also term a ''shrinking strategy'') to reduce the number of active variables under consideration at any iteration, and correspondingly the number of evaluations of partial gradients in each iteration. Our strategy is particularly aggressive in that it runs over multiple epochs, and in each epoch, chooses the next working set as a subset of the current working set rather than the whole set; while at the end of an epoch, allows for other variables to come in. As a result, in most iterations, our aggressive shrinking strategy only requires the evaluation of partial gradients in the current working set. Moreover, we adapt the L-BFGS update to the shrinking procedure such that the update can be conducted \emph{without any loss of accuracy} caused by aggressive shrinking.  Thirdly, we store our data in a \textit{feature-indexed} structure to combine data sparsity as well as iterate sparsity.

\cite{newtontype_theory} showed global convergence and asymptotic superlinear convergence for Prox-QN methods under the assumption that the loss function is \emph{strongly convex}. However, this assumption is known to fail to hold in high-dimensional sampling settings, where the Hessian is typically rank-deficient, or indeed even in low-dimensional settings where there are redundant features. In a key contribution of the paper, we provide provable guarantees of asymptotic superlinear convergence for Prox-QN method, even without assuming strong-convexity, but under a restricted variant of strong convexity, termed Constant Nullspace Strong Convexity (CNSC), which is typically satisfied by standard $M$-estimators.

To summarize, our contributions are twofold. (a) We present a carefully constructed proximal quasi-Newton method for computationally intensive (CI) $\ell_1$-regularized $M$-estimators, which we empirically show to outperform many state-of-the-art methods on CRF problems. (b) We provide the first proof of asymptotic superlinear convergence for Prox-QN methods without strong convexity, but under a restricted variant of strong convexity, satisfied by typical $M$-estimators, including the $\ell_1$-regularized CRF MLEs.

\section{Proximal Quasi-Newton Method}
\vspace{-5pt}
A proximal quasi-Newton approach to solve $M$-estimators of the form~\eqref{generalObj} proceeds by iteratively constructing a quadratic approximation of the objective function \eqref{generalObj} to find the quasi-Newton direction, and then conducting a line search procedure to obtain the next iterate.

Given a solution estimate $\w_t$ at iteration $t$, the proximal quasi-Newton method computes a descent direction by minimizing the following regularized quadratic model,
\begin{equation}\label{inner}
 \bd_t = \text{arg }\min_\Delta \g^T_t \Delta + \frac 1 2 \Delta^T B_t \Delta + \lambda\|\w_t+\Delta\|_1
\end{equation}
where $\g_t  = \g(\w_t)$ is the gradient of $\ell(\w_t)$ and $B_t$ is an approximation to the Hessian of $\ell(\w)$. $B_t$ is usually formulated by the L-BFGS algorithm.
This subproblem \eqref{inner} can be efficiently solved by randomized coordinate descent algorithm as shown in Section \ref{cd_inner}.

The next iterate is obtained from the backtracking line search procedure, $\w_{t+1} = \w_t +\alpha_t \bd_t$, where the step size $\alpha_t $ is tried over $\{ \beta^0,\beta^1,\beta^2,... \}$ until the Armijo rule is satisfied,
$$f(\w_t + \alpha_t \bd_t) \leq f(\w_t) + \alpha_t \sigma \Delta_t,$$
where $0<\beta <1$, $0<\sigma<1$ and $\Delta_t = \bg_t^T \bd_t + \lambda( \|\w_t+\bd_t\|_1 - \|\w_t\|_1$).

\subsection{BFGS update formula}
\vspace{-5pt}
$B_t$ can be efficiently updated by the gradients of the previous iterations according to the BFGS update \cite{L-BFGS},
\begin{equation}\label{BFGS}
B_t = B_{t-1} - \frac{B_{t-1}\s_{t-1} \s^T_{t-1} B_{t-1} }{ \s_{t-1}^T B_{t-1} \s_{t-1} } + \frac{\y_{t-1} \y^T_{t-1}} { \y_{t-1}^T \s_{t-1}}
\end{equation}
where $\s_t = \w_{t+1} - \w_t$ and $\y_t = \g_{t+1} - \g_t$ \\
We use the compact formula for $B_t$ \cite{L-BFGS},
$$B_t = B_0 - Q R Q^T = B_0 - Q \hat Q, $$
where
\[
Q := \left[ \begin{array}{cc}
B_0 S_t & Y_t
\end{array} \right], \;
R := \left[ \begin{array}{cc}
S_t^T B_0 S_t  & L_t \\
L_t^T & -D_t
\end{array} \right]^{-1} ,
\hat Q := RQ^T
\]
$$S_t = \left[ \s_0, \s_1, ..., \s_{t-1}\right],\;
Y_t = \left[ \y_0, \y_1, ..., \y_{t-1}\right]
$$
\[
D_t = diag[\s_0^T \y_0,...,\s^T_{t-1} \y_{t-1}]\text{  and  }
(L_t)_{i,j} =
\begin{cases}
 \s^T_{i-1} \y_{j-1} & \text{if } i > j \\
 0 &\text{otherwise}
\end{cases}
\]
In practical implementation, we apply Limited-memory-BFGS. It only uses the information of the most recent $m$ gradients, so that $Q$ and $\hat Q$ have only size, $d \times 2m$ and $2m \times d$, respectively. $B_0$ is usually set as $ \gamma_t I$ for computing $B_t$, where $\gamma_t = \y_{t-1}^T \s_{t-1} / \s_{t-1}^T \s_{t-1}$\cite{L-BFGS}.
As will be discussed in Section \ref{implement}, $Q$($\hat Q$) is updated just on the rows(columns) corresponding to the working set, $\mA$. The time complexity for L-BFGS update is $O(m^2 |\mA| + m^3) $.

\subsection{Coordinate Descent for Inner Problem}\label{cd_inner}
\vspace{-5pt}
Randomized coordinate descent is carefully employed to solve the inner problem \eqref{inner} by Tang and Scheinberg \cite{lhac}. In the update for coordinate $j$, $\bd \leftarrow \bd + z^* \e_j$, $z^*$ is obtained by solving the one-dimensional problem,
$$z^* =  \text{arg} \min_z \frac{1}{2} (B_t)_{jj} z^2 + ((\bg_t)_j + (B_t \bd)_j) z + \lambda | (\w_t)_j + d_j + z |$$
This one-dimensional problem has a closed-form solution,
$z^* = -c +\mathcal{S} (c-b/a,\lambda/a)$
,where $\mathcal{S}$ is the soft-threshold function and $a=(B_t)_{jj} $, $b=(\g_t)_j + (B_t \bd)_j$ and $c= (\w_t)_j + d_j$.
For $B_0 = \gamma_t I$, the diagonal of $B_t$ can be computed by $(B_t)_{jj} = \gamma_t - \bq_j^T \hat \bq_j$, where $\bq_j^T$ is the j-th row of $Q$ and $\hat \bq_j$ is the j-th column of $\hat Q$. And the second term in $b$, $(B_t \bd)_j$ can be computed by,
$$(B_t \bd)_j = \gamma_t d_j - \bq_j^T \hat Q \bd=\gamma_t d_j -  \bq_j^T \hat \bd, $$
where $\hat \bd := \hat Q \bd$. Since $\hat \bd$ has only $2m$ dimension, it is fast to update $(B_t \bd)_j $ by $\bq_j$ and $\hat \bd$. In each inner iteration,  only $d_j$ is updated, so we have the fast update of $\hat \bd$, $\hat \bd \leftarrow \hat \bd + \hat \bq_j z^*$.

Since we only update the coordinates in the working set, the above algorithm has only computation complexity $O(m|\mA| \times inner\_ iter)$, where $inner\_iter$ is the number of iterations used for solving the inner problem.

\subsection{Implementation}\label{implement}
\vspace{-5pt}
In this section, we discuss several key implementation details used in our algorithm to speed up the optimization.

\textbf{Shrinking Strategy} \\
In each iteration, we select an active or working subset $\mathcal{A}$ of the set of all variables: only the variables in this set are updated in the current iteration. The complementary set, also called the fixed set, has only values of zero and is not updated. The use of such a shrinking strategy reduces the overall complexity from $O(d)$ to $O(|\mA|)$. Specifically, we (a) update the gradients just on the working set, (b) update $Q$ ($\hat Q$) just on the rows(columns) corresponding to the working set, and (c) compute the latest entries in $D_t$, $\gamma_t$, $L_t$ and $S_t^TS_t$ by just using the corresponding working set rather than the whole set.

The key facet of our ``shrinking strategy'' however is in aggressively shrinking the active set: at the next iteration, we set the active set to be a subset of the previous active set, so that $\mathcal{A}_t \subset \mathcal{A}_{t-1} $. Such an aggressive shrinking strategy however is not guaranteed to only weed out irrelevant variables. Accordingly, we proceed in epochs. In each epoch, we progressively shrink the active set as above, till the iterations seem to converge. At that time, we then allow for all the ``shrunk'' variables to come back and start a new epoch. Such a strategy was also called an $\epsilon$-cooling strategy by Fan et al. \cite{liblinear}, where the shrinking stopping criterion is loose at the beginning, and progressively becomes more strict each time all the variables are brought back. For L-BFGS update, when a new epoch starts, the memory of L-BFGS is cleaned to prevent any loss of accuracy.

Because at the first iteration of each new epoch, the entire gradient over all coordinates is evaluated, the computation time for those iterations accounts for a significant portion of the total time complexity. Fortunately, our experiments show that the number of epochs is typically between 3-5.

\textbf{Inexact inner problem solution}

Like many other proximal methods, e.g. GLMNET and QUIC, we solve the inner problem inexactly. This reduces the time complexity of the inner problem dramatically. The amount of inexactness is based on a heuristic method which aims to balance the computation time of the inner problem in each outer iteration. The computation time of the inner problem is determined by the number of inner iterations and the size of working set. Thus, we let the number of inner iterations,
$ inner\_ iter  = \min\{ max\_ inner, \lfloor d/|\mA| \rfloor \}$, where $max\_inner = 10$ in our experiment.

\textbf{Data Structure for both model sparsity and data sparsity} \\
In our implementation we take two sparsity patterns into consideration: (a) model sparsity, which accounts for the fact that most parameters are equal to zero in the optimal solution; and (b) data sparsity, wherein most feature values of any particular instance are zeros. We use a \textit{feature-indexed} data structure to take advantage of both sparsity patterns. Computations involving data will be time-consuming if we compute over all the instances including those that are zero. So we leverage the sparsity of data in our experiment by using vectors of pairs, whose members are the index and its value. Traditionally, each vector represents an instance and the indices in its pairs are the feature indices. However, in our implementation, to take both model sparsity and data sparsity into account, we use an inverted data structure, where each vector represents one feature (\textit{feature-indexed}) and the indices in its pairs are the instance indices. This data structure facilitates the computation of the gradient for a particular feature, which involves only the instances related to this feature.

We summarize these steps in the algorithm below. And a detailed algorithm is in Appendix \ref{detail_alg}. 

\begin{algorithm}[h!]
\caption{Proximal Quasi-Newton Algorithm (Prox-QN)}
\label{alg}
\begin{algorithmic}[1]

\Require Dataset $\{ \x^{(i)},\y^{(i)} \}_{i=1,2,...,N}$, termination criterion $\epsilon$, $\lambda$ and L-BFGS memory size $m$.
\Ensure $\w^*$ converging to arg min$_{\w} f(\w)  $.
\State Initialize $\w \leftarrow \mathbf{0}$, $\g \leftarrow \partial \ell(\w) / \partial \w$, working set $\mA \leftarrow \{ 1,2,...d \}$, and $S$, $Y$, $Q$, $\hat Q$ $\leftarrow \phi$.
\While{termination criterion is not satisfied or working set doesn't contain all the variables}
	\State Shrink working set.
	\If{ Shrinking stopping criterion is satisfied }
		\State Take all the shrunken variables back to working set and clean the memory of L-BFGS.
		\State Update Shrinking stopping criterion and continue.
	\EndIf
	\State Solve inner problem \eqref{inner} over working set and obtain the new direction $\bd$.
	\State Conduct line search based on Armijo rule and obtain new iterate $\w$.
	\State Update $\g$, $\mathbf{s}$, $\y$, $S$, $Y$, $Q$, $\hat Q$ and related matrices over working set.
\EndWhile
\end{algorithmic}
\end{algorithm}

\section{Convergence Analysis}
\vspace{-5pt}
In this section, we analyze the convergence behavior of proximal quasi-Newton method in the super-linear convergence phase, where the unit step size is chosen. To simplify the analysis, in this section, we assume the inner problem is solved exactly and no shrinking strategy is employed. We also provide the global convergence proof for Prox-QN method with shrinking strategy in Appendix \ref{shrinking_proof}. In current literature, the analysis of proximal Newton-type methods relies on the assumption of \emph{strongly convex} objective function to prove superlinear convergence \cite{newtontype}; otherwise, only sublinear rate can be proved \cite{prox_newton_global}. However, our objective \eqref{generalObj} is not strongly convex when the dimension is very large or there are redundant features. In particular, the Hessian matrix $H(\w)$ of the smooth function $\ell(\w)$ is not positive-definite.
We thus leverage a recently introduced restricted variant of strong convexity, termed Constant Nullspace Strong Convexity (CNSC) in \cite{cnsc}. There the authors analyzed the behavior of proximal gradient and proximal Newton methods under such a condition. The proximal \emph{quasi-Newton} procedure in this paper however requires a subtler analysis, but in a key contribution of the paper, we are nonetheless able to show asymptotic superlinear convergence of the Prox-QN method under this restricted variant of strong convexity.

\begin{definition}[Constant Nullspace Strong Convexity (CNSC)]
A composite function \eqref{generalObj} is said to have Constant Nullspace Strong Convexity  restricted to space $\T$ (CNSC-$\T$) if there is a constant vector space $\T$ s.t. $\ell(\w)$ depends only on $ \proj_{\T}(\w)$, i.e. $\ell(\w) = \ell( \proj_{\T}(\w) )$, and its Hessian satisfies
\begin{equation} \label{CNSC_row}
\begin{aligned}
&m \| \bv \|^2 \leq \bv^T H(\w) \bv \leq M \| \bv \|^2, &\forall \bv \in \T , \forall \w \in \mathbb{R}^d
\end{aligned}
\end{equation}
for some $M\geq m>0$, and
\begin{equation} \label{CNSC_null}
\begin{aligned}
&H(\w)\bv = \0, &\forall \bv \in \T^{\perp}, \forall \w \in \mathbb{R}^d ,
\end{aligned}
\end{equation}
where $\proj_{\T}(\w)$ is the projection of $\w$ onto $\T$ and $\T^{\perp}$ is the complementary space orthogonal to $\T$.
\end{definition}
\vspace{-5pt}
This condition can be seen to be an algebraic condition that is satisfied by typical $M$-estimators considered in high-dimensional settings. In this paper, we will abuse the use of CNSC-$\T$ for symmetric matrices. We say a symmetric matrix $H$ satisfies CNSC-$\T$ condition if $H$ satisfies \eqref{CNSC_row} and \eqref{CNSC_null}. In the following theorems, we will denote the orthogonal basis of $\T$ as $U\in \rR ^{d\times \hat d}$, where $\hat d \leq d$ is the dimensionality of $\T$ space and $U^T U = I$. Then the projection to $\T$ space can be written as $\proj_{\T} (\w) = UU^T \w$.
\begin{theorem}[Asymptotic Superlinear Convergence]
Assume $\nabla^2 \ell (\w) $ and $\nabla \ell(\w)$ are Lipschitz continuous. Let $B_t$ be the matrices generated by BFGS update \eqref{BFGS}. Then if $ \ell (\w) $ and $B_t$ satisfy CNSC-$\T$ condition, the proximal quasi-Newton method has q-superlinear convergence:
$$
\|\z_{t+1}-\z^*\| \leq o\left(\|\z_t-\z^*\|\right) ,
$$
where $\z_t= U^T\w_t $, $\z^*=U^T \w^*$ and $\w^*$ is an optimal solution of \eqref{generalObj}.
\end{theorem}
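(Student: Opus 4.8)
The plan is to adapt the classical Dennis–Moré superlinear convergence analysis for quasi-Newton methods to the proximal setting, working entirely in the reduced coordinates $\z = U^T\w$ where strong convexity genuinely holds. The central idea is that although $f$ is not strongly convex on $\rR^d$, the CNSC-$\T$ condition says the loss $\ell$ is effectively a strongly convex function of $\z = U^T\w$ with Hessian bounded between $m$ and $M$, while being completely flat (zero Hessian) along $\T^\perp$. So I would first establish that the iterates, the quasi-Newton direction, and the gradient all decompose cleanly with respect to $\T \oplus \T^\perp$, and that the dynamics along $\T$ are what control convergence. Since $\ell(\w) = \ell(\proj_\T(\w))$, the gradient $\g(\w)$ lies in $\T$, and by the assumption that $B_t$ also satisfies CNSC-$\T$, the BFGS matrices act invertibly on $\T$ and annihilate $\T^\perp$; this lets me project the quadratic subproblem \eqref{inner} onto $\T$.

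\textbf{The main steps I would carry out, in order.} First I would show that once the unit step size is accepted (the regime the theorem restricts to), the proximal quasi-Newton update restricted to $\T$ satisfies a fixed-point-type relation of the form $\z_{t+1} - \z^* = (\text{error involving } (B_t - H^*)\bd_t \text{ projected to } \T)$, mirroring the smooth Newton case but with the nonsmooth $\ell_1$ term handled via the optimality (normal-cone) conditions of the prox subproblem. Second, I would invoke the Lipschitz continuity of $\nabla^2\ell$ to control the discrepancy between $B_t$ and the true Hessian along the direction of motion. Third — and this is where the Dennis–Moré condition enters — I would prove that the BFGS secant updates, when restricted and analyzed on the $\hat d$-dimensional subspace $\T$, still satisfy the key limit
\[
\lim_{t\to\infty} \frac{\|(B_t - H^*)\,\bd_t\|}{\|\bd_t\|} = 0,
\]
so that the Hessian approximation becomes accurate in the relevant direction. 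Combining this with the strong-convexity lower bound $m\|\bv\|^2 \le \bv^T H \bv$ on $\T$ gives the superlinear contraction $\|\z_{t+1}-\z^*\| \le o(\|\z_t - \z^*\|)$.

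\textbf{The hard part} will be establishing the Dennis–Moré limit for the BFGS updates under only the \emph{restricted} strong convexity. In the standard theory, positive-definiteness of the full Hessian guarantees that the secant pairs $(\s_t, \y_t)$ have $\s_t^T\y_t > 0$ and that the BFGS matrices stay uniformly well-conditioned, which is exactly what the Dennis–Moré argument needs. Here $H$ is only positive-definite on $\T$ and singular on $\T^\perp$, so I must show that the secant information $\y_t = \g_{t+1} - \g_t$ lives in $\T$ (which follows since gradients lie in $\T$), that $\s_t$ has a component in $\T$ controlling the curvature pairing $\s_t^T\y_t$, and that the BFGS recursion preserves the CNSC-$\T$ structure of $B_t$ so that the effective update happens on a fixed $\hat d$-dimensional space. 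The subtlety is ensuring $\s_t^T\y_t$ stays bounded away from degeneracy using only the projected components $\proj_\T(\s_t)$, and that the components of $\s_t$ in $\T^\perp$ do not corrupt the approximation; I expect the proof that $B_t$ retains CNSC-$\T$ (so that the whole analysis can be pushed through $U$ into a genuinely strongly convex $\hat d$-dimensional problem) to be the technical crux, after which the classical Broyden–Dennis–Moré machinery applies on the reduced space.
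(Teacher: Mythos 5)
Your overall route is the same as the paper's: pass to the reduced space $\mZ=\{\z=U^T\w\}$ where CNSC-$\T$ makes $\hell(\z)=\ell(U\z)$ genuinely strongly convex, show the BFGS recursion descends to that space, establish a Dennis--Mor\'e condition for the reduced matrices $\hB_t$, and conclude superlinear convergence via proximal-Newton/quasi-Newton machinery (the paper invokes Theorem 3.7 of Lee--Sun--Saunders together with the quadratic convergence of prox-Newton under CNSC; your steps 1--2 re-derive that part in outline, which is fine). However, there is a genuine gap: you never supply the convergence-rate prerequisite that the Dennis--Mor\'e limit for BFGS actually requires. Proving $\lim_{t\to\infty}\|(\hB_t-\hH^*)(\z_{t+1}-\z_t)\|/\|\z_{t+1}-\z_t\|=0$ for BFGS is done by a bounded-deterioration argument (Theorem 6.6 of Nocedal--Wright), whose hypothesis is the summability $\sum_{t}\|\z_t-\z^*\|<\infty$; well-conditioned curvature pairs and preservation of the CNSC-$\T$ structure are not sufficient on their own. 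The paper fills this hole with a separate global linear convergence lemma (Lemma \ref{global_linear}, obtained from Tseng--Yun's Theorem 2 applied with the full coordinate block), which yields R-linear convergence of $\{\z_t\}$ and hence the required summability. Without some such lemma, your closing step ``the classical Broyden--Dennis--Mor\'e machinery applies on the reduced space'' does not go through.

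Relatedly, what you single out as the technical crux is, after the reduction, nearly immediate: since gradients lie in $\T$, we have $\y_t=UU^T\y_t$, so $\s_t^T\y_t=\hs_t^T\hy_t\geq m\|\hs_t\|^2>0$ by the reduced strong convexity, and sandwiching the update \eqref{BFGS} with $UU^T$ plus induction gives $B_t=U\hB_t U^T$ with $\hB_t$ obeying the reduced recursion \eqref{newBFGS}; the $\T^{\perp}$ component of $\s_t$ is annihilated automatically and corrupts nothing. One further gloss deserves care: ``projecting the quadratic subproblem onto $\T$'' is not literally possible, because $\|\w_t+\Delta\|_1$ does not factor through $U^T$. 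One needs the infimal projection $h(\z)=\min_{U^T\w=\z}\lambda\|\w\|_1$, a proof that $h$ is convex, and the verification that the unit-step iterates correspond, i.e.\ that $\z^+=U^T\w^+$ solves the reduced proximal subproblem \eqref{nextIter} --- the paper establishes exactly this correspondence before anything else, and it is what licenses running the entire argument in the reduced space.
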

\vspace{-5pt}
The proof is given in Appendix \ref{proof_theorem1_2}. We prove it by exploiting the CNSC-$\T$ property. First, we re-build our problem and algorithm on the reduced space $\mZ = \{ \z \in \rR^{\hat d}| \z = U^T \w \}$, where the strong-convexity property holds. Then we prove the asymptotic superlinear convergence on $\mZ$ following Theorem 3.7 in \cite{newtontype_theory}.

\begin{theorem} \label{F_bound_by_z}
For Lipschitz continuous $\ell(\w)$, the sequence $\{\w_t\}$ produced by the proximal quasi-Newton Method in the super-linear convergence phase has
\begin{equation} \label{tmp4}
f(\w_t)-f(\w^*) \leq L \|\z_t-\z^*\|,
\end{equation}
where $L = L_{\ell} + \lambda \sqrt{d} $, $L_{\ell}$ is the Lipschitz constant of $\ell(\w)$, $\z_t=U^T \w_t$ and $\z^*=U^T \w^*$.
\end{theorem}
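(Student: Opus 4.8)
The plan is to split the suboptimality into its smooth and nonsmooth parts, matching the two summands of $L = L_{\ell} + \lambda\sqrt{d}$, and to bound each piece by $\|\z_t - \z^*\|$ separately. Writing
\[
f(\w_t) - f(\w^*) = \left(\ell(\w_t) - \ell(\w^*)\right) + \lambda\left(\|\w_t\|_1 - \|\w^*\|_1\right),
\]
I would control the loss term using the CNSC-$\T$ structure and the regularizer term using a reverse-triangle-inequality argument together with a structural fact about the iterates.

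For the loss term, the CNSC-$\T$ condition gives $\ell(\w) = \ell(\proj_\T(\w)) = \ell(UU^T\w) = \ell(U\z)$ with $\z = U^T\w$, so $\ell$ depends on $\w$ only through $\z$. Hence $\ell(\w_t) - \ell(\w^*) = \ell(U\z_t) - \ell(U\z^*)$, and Lipschitz continuity of $\ell$ combined with $U^TU = I$ (so that $U$ is an isometry, $\|U\bv\| = \|\bv\|$) yields
\[
\ell(\w_t) - \ell(\w^*) \leq L_{\ell}\,\|U(\z_t - \z^*)\| = L_{\ell}\,\|\z_t - \z^*\|.
\]

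For the regularizer term I would first establish the structural fact that, in the super-linear phase, the iterate and the optimum share the same component in $\T^{\perp}$, i.e. $\w_t - \w^* \in \T$, equivalently $\w_t - \w^* = UU^T(\w_t - \w^*) = U(\z_t - \z^*)$. Granting this, the reverse triangle inequality together with the norm bound $\|\bu\|_1 \leq \sqrt{d}\,\|\bu\|_2$ on $\rR^d$ gives
\[
\|\w_t\|_1 - \|\w^*\|_1 \leq \|\w_t - \w^*\|_1 = \|U(\z_t - \z^*)\|_1 \leq \sqrt{d}\,\|U(\z_t - \z^*)\|_2 = \sqrt{d}\,\|\z_t - \z^*\|,
\]
again using that $U$ is an isometry. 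Adding the two bounds and collecting constants delivers $f(\w_t) - f(\w^*) \leq (L_{\ell} + \lambda\sqrt{d})\|\z_t - \z^*\|$, as claimed.

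The main obstacle is justifying $\w_t - \w^* \in \T$. I expect this to follow from the reduced-space reformulation already invoked for Theorem 1: since $\ell(\w) = \ell(U\z)$ we have $\g_t = \nabla\ell(\w_t) \in \T$, and by the CNSC-$\T$ assumption on $B_t$ we have $B_t\bv = \0$ for all $\bv \in \T^{\perp}$, so the smooth part $\g_t^T\Delta + \frac{1}{2}\Delta^T B_t\Delta$ of the direction subproblem \eqref{inner} depends only on $\proj_\T(\Delta)$. Starting from $\w_0 = \0 \in \T$ and using that the unit step is taken in this phase, one argues inductively that the $\T^{\perp}$-component of the iterates stays pinned to that of $\w^*$, so that on $\mZ$ the reformulated algorithm evolves $\z_t = U^T\w_t$ while $\proj_{\T^{\perp}}(\w_t)$ remains matched to the optimum. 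Making this pinning precise — in particular controlling how the $\ell_1$ term, which unlike the smooth part does \emph{not} decouple across $\T$ and $\T^{\perp}$, interacts with the subproblem solution — is the delicate step; everything else reduces to the Lipschitz and norm-equivalence computations above.
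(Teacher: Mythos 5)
Your bound on the smooth part is exactly the paper's, and your observation that the smooth part of subproblem \eqref{inner} depends only on $\proj_{\T}(\Delta)$ is the right ingredient --- but the structural fact you hang the regularizer bound on, $\w_t - \w^* \in \T$, is a genuine gap: it is false in general, and no induction will rescue it. What the decoupling actually gives (and what the paper records as \eqref{wt_zt}) is only that each unit-step iterate is a minimum-$\ell_1$ selection on its own fiber, $\w_t = \mathrm{argmin}\{\|\w\|_1 : U^T\w = \z_t\}$. Minimum-$\ell_1$ points over \emph{different} fibers need not share their $\T^{\perp}$ components; they need not even be unique. For instance, with $d=2$ and $\T = \mathrm{span}\{(1,1)\}$, every point on the segment from $(c,0)$ to $(0,c)$ minimizes $\|\cdot\|_1$ on the fiber $\{w_1+w_2=c\}$ for $c>0$, so the $\T^{\perp}$ component of the selection ranges over a whole interval; taking the selections $\w_t=(c_t,0)$ and $\w^*=(c^*,0)$ gives $\w_t-\w^*=(c_t-c^*,0)\notin\T$. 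Since your chain $\|\w_t\|_1 - \|\w^*\|_1 \leq \|\w_t-\w^*\|_1 \leq \sqrt{d}\,\|\w_t-\w^*\|_2 = \sqrt{d}\,\|\z_t-\z^*\|$ needs the pinning precisely for the last equality (without it one only has $\|\w_t-\w^*\|_2 \geq \|\z_t-\z^*\|$, the wrong direction), the nonsmooth half of your argument does not go through as set up.

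The fix is short, and it is how the paper argues: use the variational characterization of $\w_t$ directly instead of trying to locate its $\T^{\perp}$ component. The competitor $UU^T\w_t + (I-UU^T)\w^* = \w^* + U(\z_t-\z^*)$ lies on the same fiber as $\w_t$ (its image under $U^T$ is $\z_t$), so optimality of $\w_t$ on that fiber gives
\begin{equation*}
\|\w_t\|_1 \;\leq\; \bigl\|\w^* + U(\z_t-\z^*)\bigr\|_1 \;\leq\; \|\w^*\|_1 + \sqrt{d}\,\|\z_t-\z^*\|,
\end{equation*}
using the triangle inequality, $\|\cdot\|_1 \leq \sqrt{d}\,\|\cdot\|_2$, and the isometry $\|U\bv\|=\|\bv\|$. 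This sidesteps the $\T^{\perp}$ components of $\w_t$ and $\w^*$ entirely --- they never have to match --- and combined with your (correct) smooth-part bound it yields the claimed constant $L = L_{\ell} + \lambda\sqrt{d}$.
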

\vspace{-5pt}
The proof is also in Appendix \ref{proof_theorem1_2}. It is proved by showing that both the smooth part and the non-differentiable part satisfy the modified Lipschitz continuity.

\section{Application to Conditional Random Fields with $\ell_1$ Penalty}
\vspace{-5pt}
In CRF problems, we are interested in learning a conditional distribution of labels $\y \in \Y$ given observation $\x\in\X$, where $\y$ has application-dependent structure such as sequence, tree, or table in which label assignments have inter-dependency. The distribution is of the form
$$P_{\w}(\y|\x) = \frac{1}{Z_{\w}(\x)} \exp\left\{ \sum_{k=1}^{d} w_k f_k ( \y,\x) \right\},$$
where $f_k$ is the feature functions, $w_k$ is the associated weight, $d$ is the number of feature functions and $Z_{\w} (\x)$ is the partition function.
Given a training data set $\{(\x_i,\y_i)\}_{i=1}^N$, our goal is to find the optimal weights ${\w}$ such that the following $\ell_1$-regularized negative log-likelihood is minimized.
\begin{equation}\label{obj}
\min_{\w} f(\w) =  \lambda \|\w\|_1-\sum_{i=1}^{N} \log{ P_{\w} (\y^{(i)} | \x^{(i)}) }
\end{equation}
Since $|\Y|$, the number of possible values $\y$ takes, can be exponentially large, the evaluation of $\ell(\w)$ and the gradient $\nabla \ell(\w)$ needs application-dependent oracles to conduct the summation over $\Y$. For example, in \emph{sequence labeling problem}, a dynamic programming oracle, \emph{forward-backward} algorithm, is usually employed to compute $\nabla \ell(\w)$. Such an oracle can be very expensive. In Prox-QN algorithm for sequence labeling problem, the \emph{forward-backward} algorithm takes $O(|Y|^2 NT  \times exp)$ time, where $exp$ is the time for the expensive exponential computation, $T$ is the sequence length and $Y$ is the possible label set for a symbol in the sequence. Then given the obtained oracle, the evaluation of the partial gradients over the working set $\mA$ has time complexity, $O(D_{nnz} |\mA|T)$,  where $D_{nnz}$ is the average number of instances related to a feature. Thus when $O(|Y|^2 NT  \times exp + D_{nnz} |\mA|T) > O(m^3+m^2 |\mA|)$, the gradients evaluation time will dominate.

The following theorem gives that the $\ell_1$-regularized CRF MLEs satisfy the CNSC-$\T$ condition. 
\begin{theorem}
With $\ell_1$ penalty, the CRF loss function, $\ell(\w) = -\sum_{i=1}^{N} \log{ P_{\w} (\y^{(i)} | \x^{(i)}) } $, satisfies the CNSC-$\T$ condition with $\T = \mN^{\perp}$, where $\mN = \{ \bv \in \rR^{d} | \Phi^T \bv = 0 \}$ is a constant subspace of $\rR^d$ and $\Phi \in \rR^{d \times (N|\mY|)}$ is defined as below,
$$\Phi_{kn} = f_k(\y_l,\x^{(i)}) - E \left[ f_k(\y,\x^{(i)}) \right] $$
where $n =  (i-1)|\mY| + l$, $l=1,2,...|\mY|$ and $E$ is the expectation over the conditional probability $P_{\w} (\y | \x^{(i)})$.
\end{theorem}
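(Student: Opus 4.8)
The plan is to verify the three defining properties of CNSC-$\T$ directly from the analytic form of the CRF negative log-likelihood, taking $\T=\mN^{\perp}$. I would first record the gradient and Hessian. Writing $Z_{\w}(\x)=\sum_{\y}\exp\{\sum_k w_k f_k(\y,\x)\}$, the per-instance loss is $-\sum_k w_k f_k(\y^{(i)},\x^{(i)}) + \log Z_{\w}(\x^{(i)})$, so $\partial \ell/\partial w_k = \sum_i\big(E[f_k(\y,\x^{(i)})] - f_k(\y^{(i)},\x^{(i)})\big)$, and a second differentiation of $\log Z_{\w}$ gives the standard identity that the Hessian block for instance $i$ is the covariance $\mathrm{Cov}_{P_{\w}(\cdot|\x^{(i)})}(f_j,f_k)$. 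Collecting these blocks and centering the features exactly as in the definition of $\Phi$, I would establish the factorization
\[
H(\w) = \Phi P \Phi^T,
\]
where $P$ is the block-diagonal matrix whose diagonal entries are the conditional probabilities $P_{\w}(\y_l|\x^{(i)})$. This is the workhorse for everything that follows, since it gives $\bv^T H(\w)\bv = \|P^{1/2}\Phi^T\bv\|^2$.

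The second step is the invariance $\ell(\w)=\ell(\proj_{\T}(\w))$, equivalently $\ell(\w+\bu)=\ell(\w)$ for every $\bu\in\mN$. For $\bu\in\mN$ the condition $\Phi^T\bu=\0$ means $\sum_k u_k f_k(\y_l,\x^{(i)})$ is independent of the label index $l$; call this constant $c_i$. Then the linear term of the per-instance loss changes by $-c_i$, while $\log Z_{\w+\bu}(\x^{(i)}) = c_i + \log Z_{\w}(\x^{(i)})$ because the common factor $e^{c_i}$ pulls out of the partition sum, so the two contributions cancel. This same computation shows that the characterization ``$\sum_k v_k f_k(\y_l,\x^{(i)})$ is constant in $l$ for every $i$'' is equivalent to $\Phi^T\bv=\0$ and makes no reference to $\w$, which is precisely why $\mN$ is a \emph{constant} subspace even though $\Phi$ itself depends on $\w$. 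Property \eqref{CNSC_null} is then immediate: for $\bv\in\T^{\perp}=\mN$ we have $\Phi^T\bv=\0$, hence $H(\w)\bv=\Phi P \Phi^T \bv=\0$.

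It remains to establish the two-sided bound \eqref{CNSC_row} on $\T=\mN^{\perp}=\mathrm{range}(\Phi)$. The upper bound is routine: assuming bounded feature functions, the centered entries of $\Phi$ are bounded and the diagonal of $P$ consists of probabilities in $[0,1]$, so $\|P^{1/2}\Phi^T\|$ admits a uniform operator-norm bound $M$. I expect the lower bound to be the main obstacle. From $\bv^T H(\w)\bv = \sum_n P_n\,(\Phi^T\bv)_n^2$ and the fact that $\bv\in\mN^{\perp}\setminus\{\0\}$ forces $\Phi^T\bv\neq\0$, strict positivity is clear, but a constant $m>0$ uniform over \emph{all} $\w\in\rR^d$ is delicate: as $\w$ is driven to make some conditional distribution $P_{\w}(\cdot|\x^{(i)})$ degenerate, the weights $P_n$ approach $0$ and $\mathrm{Var}_{P_{\w}}(\sum_k v_k f_k)$ can be pushed toward $0$. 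The way I would close this gap is by coercivity: since $\ell\ge 0$ and $f(\w)\ge\lambda\|\w\|_1$, every sublevel set is bounded, so $\proj_{\T}(\w)$ stays in a compact set on which each $P_{\w}(\y_l|\x^{(i)})$ is continuous and strictly positive, hence bounded below; minimizing the continuous, strictly positive quantity $\bv^T H(\w)\bv$ over the compact product of this set with the unit sphere of $\mathrm{range}(\Phi)$ then yields $m>0$. I would therefore read \eqref{CNSC_row} as holding over the relevant sublevel set, which is all that the superlinear-convergence analysis of Theorem 1 actually invokes.
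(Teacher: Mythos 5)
Your proposal follows essentially the same route as the paper's own proof: the factorization $H=\Phi D\Phi^T$ with $D$ diagonal carrying the conditional probabilities $P_{\w}(\y_l|\x^{(i)})$, the $\w$-free characterization of $\mN$ as the directions $\bv$ along which $\langle \bv,\phi(\y,\x^{(i)})\rangle$ is constant in $\y$ for each $i$ (which yields both the constancy of $\mN$ and property \eqref{CNSC_null}), the cancellation of the common factor $e^{c_i}$ in the partition function to get $\ell(\w)=\ell(\proj_{\T}(\w))$, and the two-sided bound \eqref{CNSC_row} from $m_p\leq D_{nn}\leq 1$ together with the extreme nonzero eigenvalues of $\Phi\Phi^T$. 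The one point where you go beyond the paper is the lower bound: the paper merely asserts that $D_{nn}\rightarrow 0$ is ``prohibited by the $\ell_1$ penalty,'' whereas you make this precise via coercivity of $f$ and compactness of the sublevel set, correctly observing that the bound cannot hold uniformly over all $\w\in\rR^d$ and need only hold where the iterates live --- a more careful rendering of the same argument rather than a different one.
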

According to the definition of CNSC-$\T$ condition, the $\ell_1$-regularized CRF MLEs don't satisfy the classical strong-convexity condition when $\mN$ has non-zero members, which happens in the following two cases: (i) the exponential representation is not minimal \cite{graphical_model}, i.e. for any instance $i$ there exist a non-zero vector $\ba$ and a constant $b_i$ such that $ \langle \ba,\phi(\y,\x^{(i)}) \rangle = b_i $, where $\phi(\y,\x) =  [ f_1( \y,\x^{(i)} ),f_2( \y,\x^{(i)} ),...,f_d( \y,\x^{(i)} ) ]^T$; (ii) $d >  N|\mY|$, i.e., the number of feature functions is very large. The first case holds in many problems, like the sequence labeling and hierarchical classification discussed in Section \ref{experiment}, and the second case will hold in high-dimensional problems.

\section{Related Methods}
\vspace{-5pt}
There have been several methods proposed for solving $\ell_1$-regularized $M$-estimators of the form in \eqref{obj}. In this section, we will discuss these in relation to our method.

\textbf{Orthant-Wise Limited-memory Quasi-Newton} (OWL-QN) introduced by Andrew and Gao \cite{owlqn} extends L-BFGS to $\ell_1$-regularized problems. In each iteration, OWL-QN computes a generalized gradient called \textit{pseudo-gradient} to determine the orthant and the search direction, then does a line search and a projection of the new iterate back to the orthant. Due to its fast convergence, it is widely implemented by many software packages, such as CRF++, CRFsuite and Wapiti. But OWL-QN does not take advantage of the model sparsity in the optimization procedure, and moreover Yu et al. \cite{owlflaw} have raised issues with its convergence proof.\\
\textbf{Stochastic Gradient Descent} (SGD) uses the gradient of a single sample as the search direction at each iteration. Thus, the computation for each iteration is very fast, which leads to fast convergence at the beginning. However, the convergence becomes slower than the second-order method when the iterate is close to the optimal solution. Recently, an $\ell_1$-regularized SGD algorithm proposed by Tsuruoka et al.\cite{l1sgd} is claimed to have faster convergence than OWL-QN. It incorporates $\ell_1$-regularization by using a cumulative $\ell_1$ penalty, which is close to the $\ell_1$ penalty received by the parameter if it had been updated by the true gradient. Tsuruoka et al. do consider data sparsity, i.e. for each instance, only the parameters related to the current instance are updated. But they too do not take the model sparsity into account.\\
\textbf{Coordinate Descent} (CD) and \textbf{Blockwise Coordinate Descent} (BCD) are popular methods for $\ell_1$-regularized problem. In each coordinate descent iteration, it solves an one-dimensional quadratic approximation of the objective function, which has a closed-form solution. It requires the second partial derivative with respect to the coordinate. But as discussed by Sokolovska et al., the exact second derivative in CRF problem is intractable. So they instead use an approximation of the second derivative, which can be computed efficiently by the same inference oracle queried for the gradient evaluation. However, pure CD is very expensive because it requires to call the inference oracle for the instances related to the current coordinate in each coordinate update. BCD alleviates this problem by grouping the parameters with the same $\x$ feature into a block. Then each block update only needs to call the inference oracle once for the instances related to the current $\x$ feature. However, it cannot alleviate the large number of inference oracle calls unless the data is very sparse such that every instance appears only in very few blocks.\\
\textbf{Proximal Newton method} has proven successful on problems of $\ell_1$-regularized logistic regression \cite{glmnet} and Sparse Invariance Covariance Estimation \cite{quic}, where the Hessian-vector product can be cheaply re-evaluated for each update of coordinate. However, the Hessian-vector product for CI function like CRF requires the query of the inference oracle no matter how many coordinates are updated at a time \cite{l2cg}, which then makes the coordinate update on quadratic approximation as expensive as coordinate update in the original problem. Our proximal quasi-Newton method avoids such problem by replacing Hessian with a low-rank matrix from BFGS update.

\section{Numerical Experiments}\label{experiment}
\vspace{-5pt}
We compare our approach, Prox-QN, with four other methods, Proximal Gradient (Prox-GD), OWL-QN \cite{owlqn}, SGD \cite{l1sgd} and BCD \cite{bcd}.
For OWL-QN, we directly use the OWL-QN optimizer developed by Andrew et al.\footnote{http://research.microsoft.com/en-us/downloads/b1eb1016-1738-4bd5-83a9-370c9d498a03/}, where we set the memory size as $m=10$, which is the same as that in Prox-QN. For SGD, we implement the algorithm proposed by Tsuruoka et al. \cite{l1sgd}, and use cumulative $\ell_1$ penalty with learning rate $\eta_k = \eta_0/(1+k/N)$, where k is the SGD iteration and $N$ is the number of samples. For BCD, we follow Sokolovska et al. \cite{bcd} but with three modifications. First, we add a line search procedure in each block update since we found it is required for convergence. Secondly, we apply shrinking strategy as discussed in Section \ref{implement}. Thirdly, when the second derivative for some coordinate is less than $10^{-10}$, we set it to be $10^{-10}$ because otherwise the lack of $\ell_2$-regularization in our problem setting will lead to a very large new iterate. 

We evaluate the performance of Prox-QN method on two problems, sequence labeling and hierarchical classification. In particular, we plot the relative objective difference $(f (\w_t)-f (\w^*))/f (\w^*) $ and the number of non-zero parameters (on a log scale) against time in seconds. More experiment results, for example, the testing accuracy and the performance for different $\lambda$'s, are in Appendix \ref{more_exp}. All the experiments are executed on 2.8GHz Intel Xeon E5-2680 v2 Ivy Bridge processor with 1/4TB memory and Linux OS.

\subsection{Sequence Labeling}
\vspace{-5pt}
In sequence labeling problems, each instance $(\x,\y) = \left\{(\x_t,y_t)\right\}_{t=1,2...,T}$ is a sequence of $T$ pairs of observations and the corresponding labels.
Here we consider the optical character recognition (OCR) problem, which aims to recognize the handwriting words. The dataset \footnote{http://www.seas.upenn.edu/~taskar/ocr/} was preprocessed by Taskar et al. \cite{m3n} and was originally collected by Kassel \cite{ocr}, and contains 6877 words (instances). We randomly divide the dataset into two part: training part with 6216 words and testing part with 661 words. The character label set $Y$ consists of 26 English letters and the observations are characters which are represented by images of 16 by 8 binary pixels as shown in Figure \ref{seq:a}. We use degree 2 pixels as the raw features, which means all pixel pairs are considered. Therefore, the number of raw features is $J = 128\times 127 /2 +128 + 1$, including a bias.  For degree 2 features, $x_{tj} = 1$ only when both pixels are $1$ and otherwise $x_{tj} = 0$, where $x_{tj}$ is the j-th raw feature of $\x_i$. For the feature functions, we use unigram feature functions $\1(y_t = y, x_{tj} = 1)$ and bigram feature functions $\1 (y_{t} = y,y_{t+1} = y')$ with their associated weights, $\Theta_{y,j}$ and $\Lambda_{y,y'}$, respectively.
So $\w = \{\Theta, \Lambda \}$ for $\Theta \in \rR ^{|Y|\times J}$ and $\Lambda \in \rR ^{|Y|\times |Y| }$ and the total number of parameters, $d = {|Y|}^2+|Y|\times J = 215,358$.
Using the above feature functions, the potential function can be specified as,
$\tilde P_{\w}(\y,\x) = \exp\left\{\langle \Lambda,  \sum_{t=1}^{T}  (\e_{y_t} \x^T_t) \rangle +\langle \Theta, \sum_{t=1}^{T-1}  (\e_{y_t} \e^T_{y_{t+1}}) \rangle \right\}
$,
where $\langle \cdot,\cdot \rangle$ is the sum of element-wise product and $\e_y \in \rR^{|Y|} $ is an unit vector with 1 at y-th entry and 0 at other entries. The gradient and the inference oracle are given in Appendix \ref{seq_gradient}. 

In our experiment, $\lambda$ is set as $100$, which leads to a relative high testing accuracy and an optimal solution with a relative small number of non-zero parameters  (see Appendix \ref{testing_accuracy}). The learning rate  $\eta_0$ for SGD is tuned to be $2\times 10^{-4}$ for best performance. In BCD, the unigram parameters are grouped into $J$ blocks according to the $\x$ features while the bigram parameters are grouped into one block. Our proximal quasi-Newton method can be seen to be much faster than the other methods. 

\begin{figure} [H]
  \label{seqresult}
  \centering
  \subfigure[Graphical model of OCR]{
    \label{seq:a} 
    \includegraphics[width=1.5in]{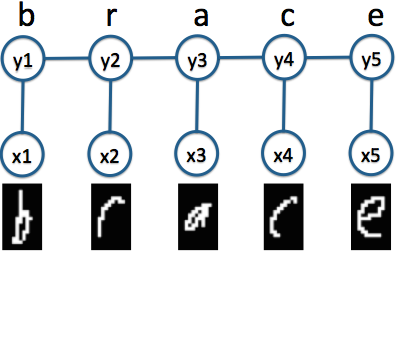}}
  \hspace{0.05in}
  \subfigure[Relative Objective Difference]{
    \label{seq:b} 
    \includegraphics[width=1.8in]{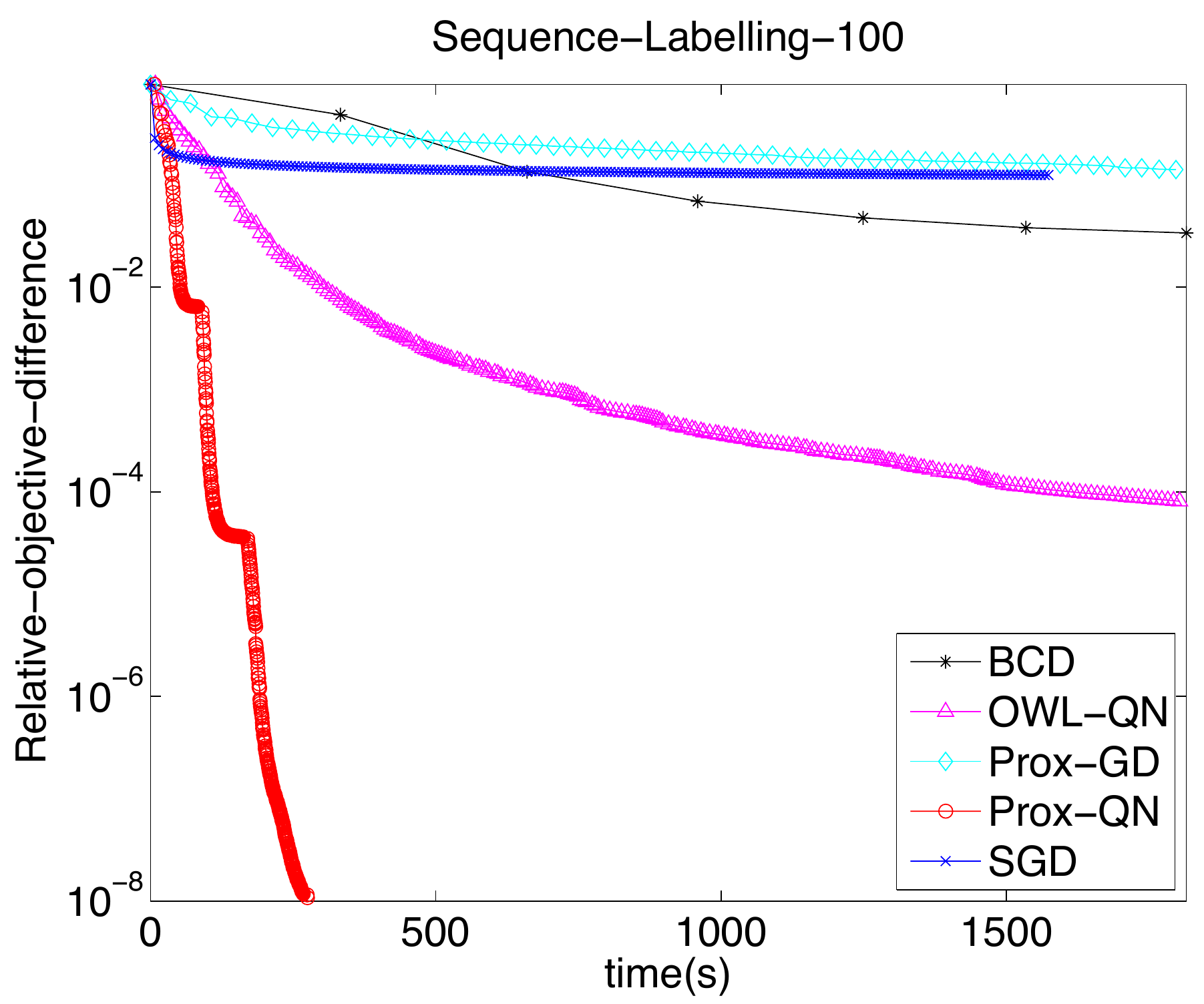}}
    \hspace{0.05in}
    \subfigure[Non-zero Parameters]{
    \label{seq:c} 
    \includegraphics[width=1.8in]{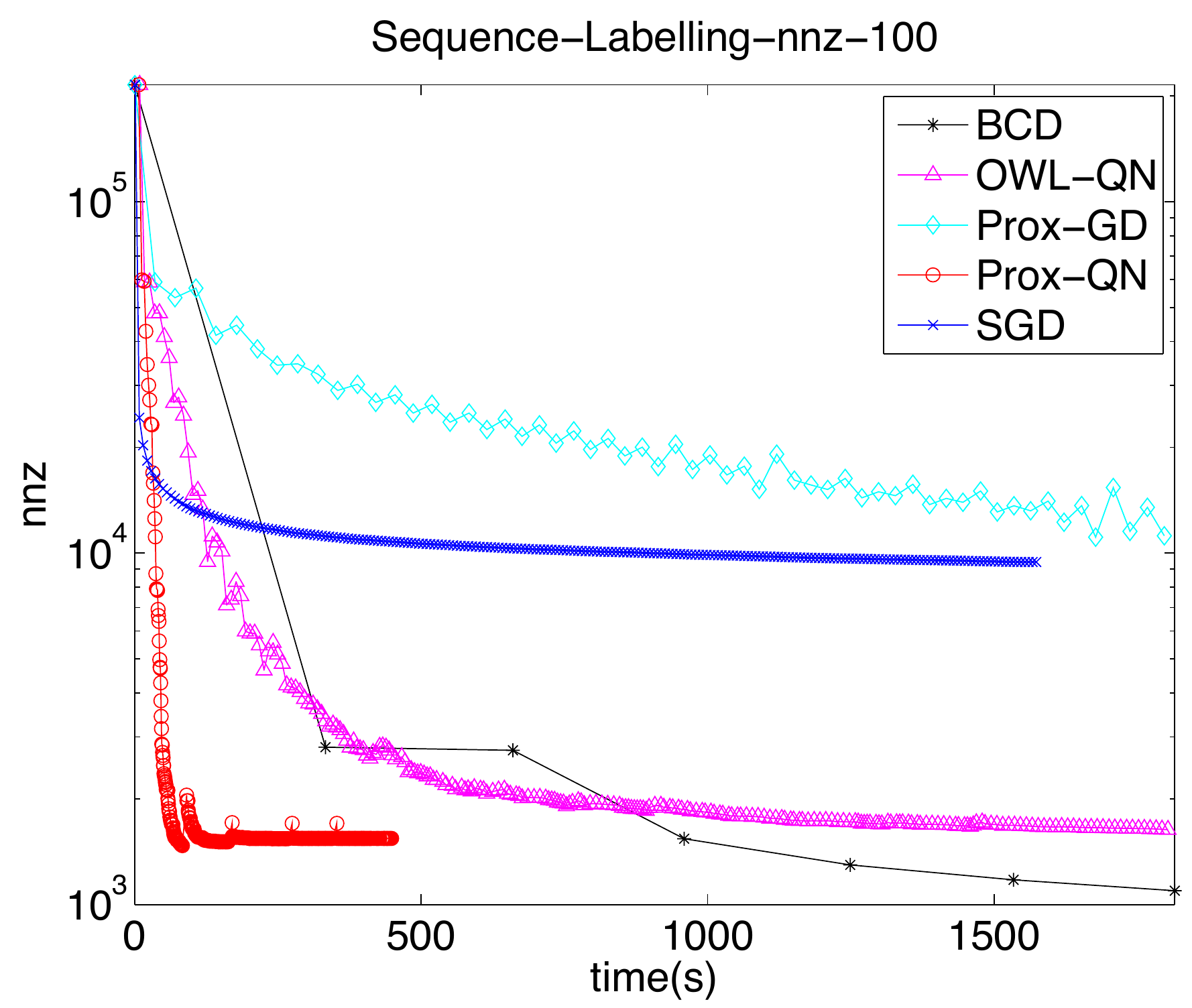}}
  \caption{Sequence Labeling Problem}
\end{figure}

\subsection{Hierarchical Classification}
\vspace{-5pt}
In hierarchical classification problems, we have a label taxonomy, where the classes are grouped into a tree as shown in Figure \ref{hier:a}. Here $y \in \Y$ is one of the leaf nodes. If we have totally $K$ classes (number of nodes) and $J$ raw features, then the number of parameters is $d = K\times J$. Let $W\in \rR^{K\times J}$ denote the weights. The feature function corresponding to $W_{k,j}$ is $f_{k,j}(y,\x) = \1[k\in \text{Path}( y )] x_j $, where $k \in \text{Path(y)}$ means class $k$ is an ancestor of $y$ or $y$ itself. The potential function is
 $\tilde P_W(y,\x) = \exp \left\{  \sum_{k\in \text{Path}(y)} \w^T_k \x \right\}$
where $\w^T_k$ is the weight vector of k-th class, i.e. the k-th row of $W$. The gradient and the inference oracle are given in Appendix \ref{hier_gradient}.

The dataset comes from Task1 of the dry-run dataset of LSHTC1\footnote{http://lshtc.iit.demokritos.gr/node/1}. It has 4,463 samples, each with $J$=51,033 raw features. The hierarchical tree has 2,388 classes which includes 1,139 leaf labels. Thus, the number of the parameters $d=$121,866,804. The feature values are scaled by svm-scale program in the LIBSVM package. We set $\lambda = 1$ to achieve a relative high testing accuracy and high sparsity of the optimal solution. The SGD initial learning rate is tuned to be $\eta_0 = 10$ for best performance. In BCD, parameters are grouped into $J$ blocks according to the raw features. 

\begin{figure} [H]
  \label{hierresult}
  \centering
  \subfigure[Label Taxonomy]{
    \label{hier:a} 
    \includegraphics[width=1.5in]{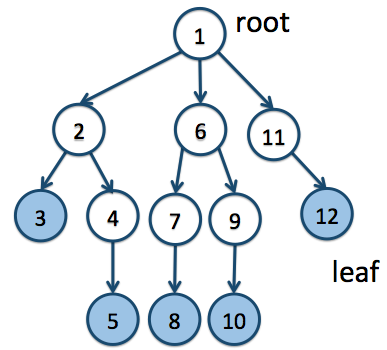}}
  \hspace{0.05in}
  \subfigure[Relative Objective Difference]{
    \label{hier:b} 
    \includegraphics[width=1.8in]{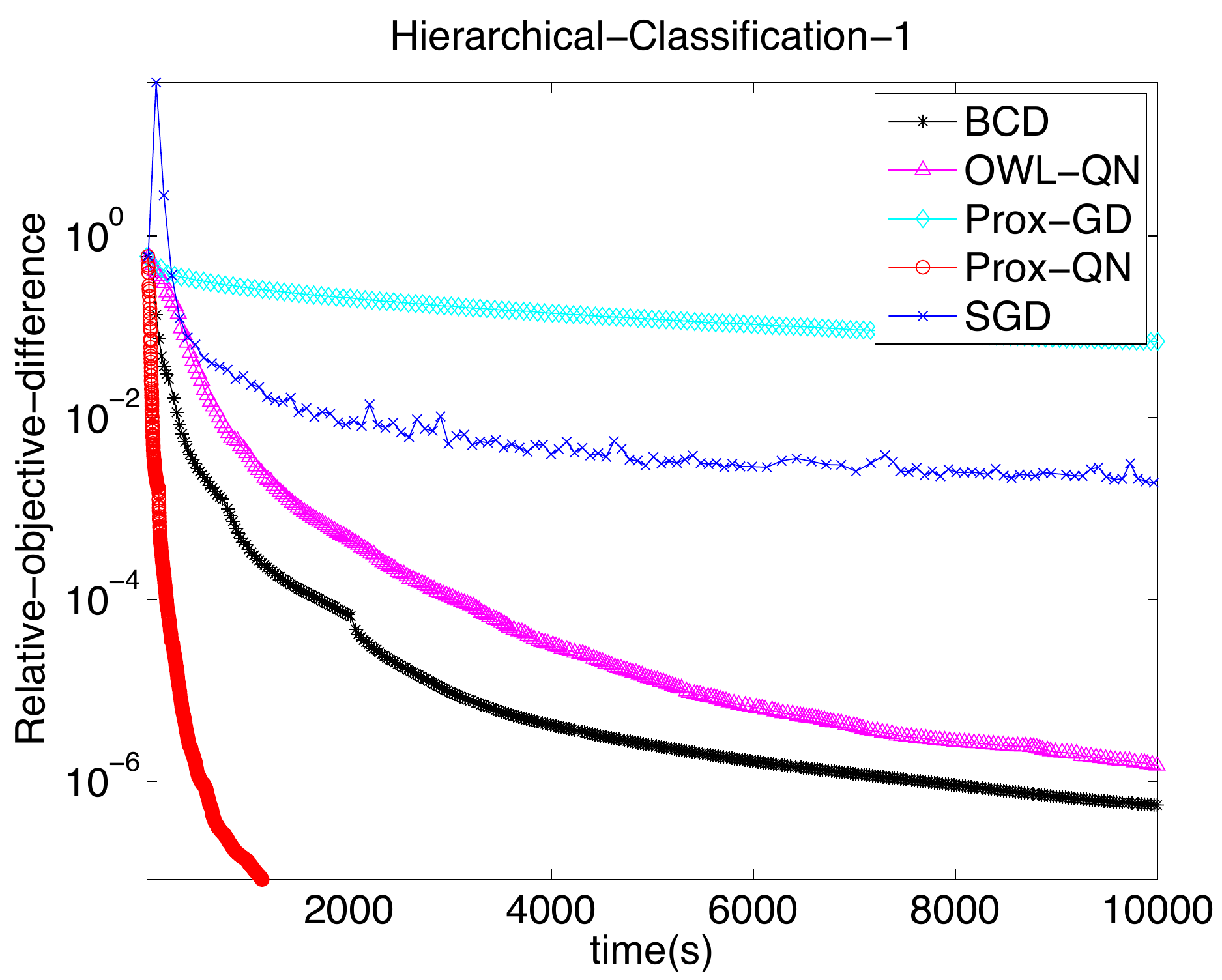}}
    \hspace{0.05in}
    \subfigure[Non-zero Parameters]{
    \label{hier:c} 
    \includegraphics[width=1.8in]{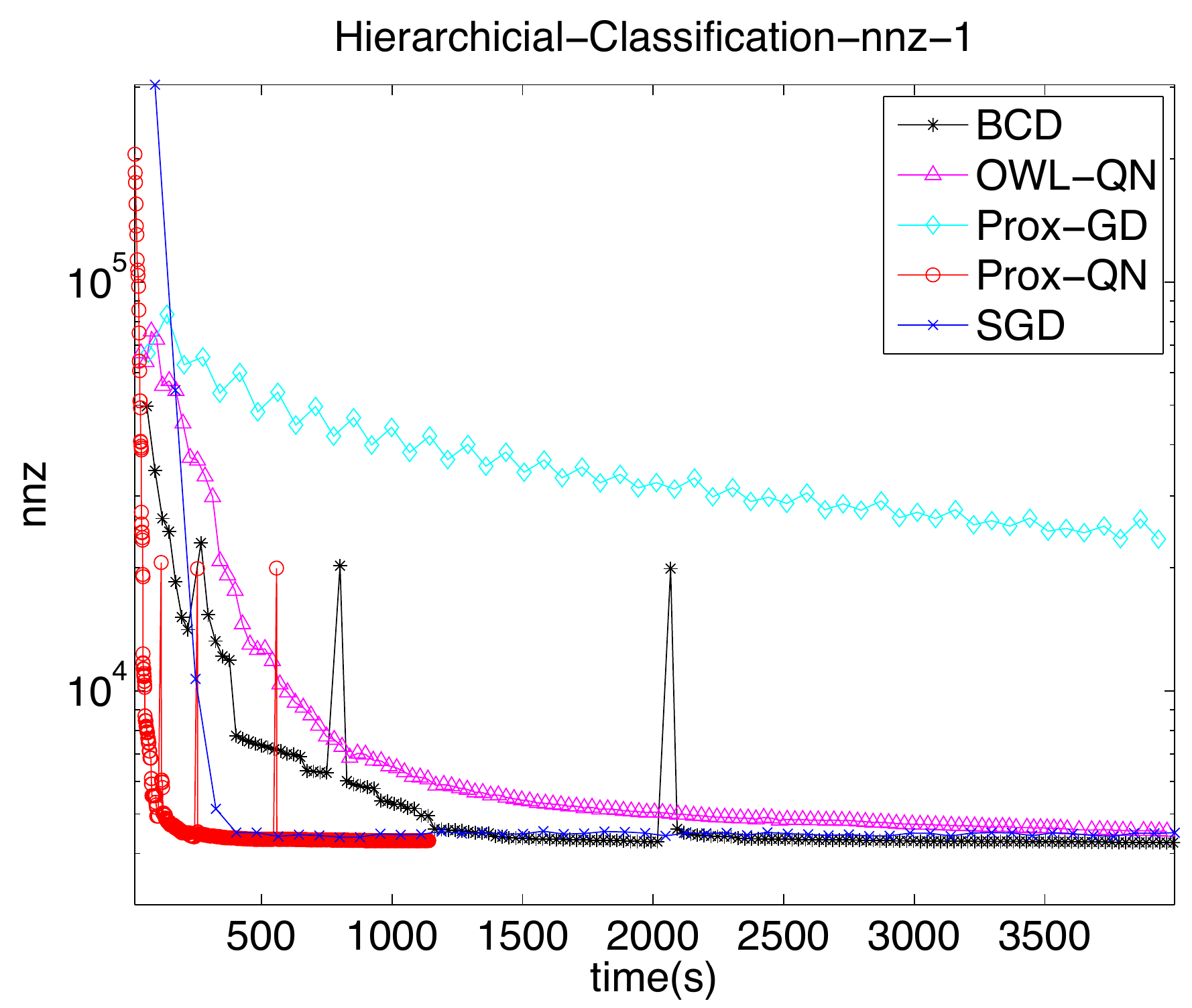}}
  \caption{Hierarchical Classification Problem}
\end{figure}

As both Figure \ref{seq:b},\ref{seq:c} and Figure \ref{hier:b},\ref{hier:c} show, Prox-QN achieves much faster convergence and moreover obtains a sparse model in much less time.

\vspace{-5pt}\subsection*{Acknowledgement}
\vspace{-5pt}
This research was supported by NSF grants CCF-1320746 and CCF-1117055. P.R. acknowledges the support of ARO via W911NF-12-1-0390 and NSF via IIS-1149803, IIS-1320894, IIS-1447574, and DMS-1264033. K.Z. acknowledges the support of the National Initiative for Modeling and Simulation fellowship



\newpage
{\Large\textbf{APPENDIX}}
\appendix

\section{Convergence Proof}
To exploit the CNSC-$\T$ property, we first re-build our problem and algorithm on the reduced space $\mZ = \{ \z \in \rR^{\hat d}| \z = U^T \w \}$, where the strong-convexity property holds. Then we prove the asymptotic super-linear convergence on $\mZ$ under the condition that the inner problem is solved exactly and no shrinking strategy is not applied. Finally we prove the objective \eqref{generalObj} is bounded by the difference between current iterate and the optimal solution. In Section \ref{shrinking_proof}, we provide the global convergence proof when the shrinking strategy is applied. 

\subsection{Representing the problem in a reduced and compact space}

\textbf{Properties of CNSC-$\T$ condition} \\
For $\ell(\w)$ satisfying CNSC-$\T$ condition, we have $\ell(\w) = \ell(\proj_{\T}(\w))$. Define $\bg$ to be the gradient of $\ell(\w)$ and $H$ to be the Hessian of $\ell(\w)$. As both $\bg$ and $H$ are in the $\T$ space, we have $\bg(\w) = UU^T \bg(\proj_{\T}(\w)) =  \bg(\proj_{\T}(\w)) $ and $H(\w) = UU^TH(\proj_{\T}(\w))UU^T = H(\proj_{\T} (\w))$.

\textbf{Objective formulation in the reduced space} \\
Define $\hell (\z) = \ell (U \z)$. Then if $\z = U^T \w$, we have $\hell (\z) = \ell(\w)$, $\hbg(\z) = U^T \bg(\w)$ and $\hH(\z) = U^T H(\w) U$, where $\hbg(\z) $ and  $\hH(\z)$ are the gradient and Hessian of $\hell(\z)$ respectively. Now $\hH$ is positive definite with minimal eigenvalue $m$. The objective \eqref{generalObj} can be re-formulated in the reduced space by
\begin{equation}\label{TObj}
\min_{\z} \hf (\z) = h(\z) + \hell (\z),
\end{equation}
where $$h(\z) = \min_{U^T \w =\z } \lambda \| \w \|_1$$
We now prove that $h(\z)$ is a convex function, i.e., 
$$c h(\z_1) + (1-c) h(\z_2) \geq h(c\z_1+(1-c)\z_2)$$
 for any $0\leq c \leq 1$, $\z_1$ and $\z_2$. 
\begin{proof}
Let $$\w_1 =  \underset{U^T \w = \z_1}{\text{argmin}}  \lambda \|w\|_1 \text{  and  } \w_2 =  \underset{U^T \w = \z_2}{\text{argmin}}  \lambda \|w\|_1$$
Then, 
\begin{align*}
c h(\z_1) + (1-c) h(\z_2)
& = \lambda (c \|\w_1\|_1 + (1-c) \|\w_2\|_1 ) \\
&\geq \lambda (\|c \w_1 + (1-c) \w_2\|_1 ) \\
& \geq h(U^T ( c \w_1 + (1-c) \w_2 )) \\
& = h(c\z_1+(1-c)\z_2)
\end{align*}
\end{proof}

The optimal solution $\z^*$ of \eqref{TObj}  has the following relationship with the optimal solution $\w^*$ of \eqref{generalObj} ,
\begin{equation}\label{relationship}
 \w^* = \underset{U^T \w = \z^*}{\text{argmin}} \textit{ }  \lambda \|\w\|_1 \text{ and } \z^* = U^T \w^*
 \end{equation}
\textbf{Lipschitz continuity in the reduced space} \\
Throughout the paper, we assume the Hessian of $\ell(\w)$ has Lipschitz continuity with constant $L_H$. According to the Lipschitz continuity,
$$\| H(\w_2) (\w_1-\w_2) -(\bg(\w_1) - \bg(\w_2) )\| \leq \frac{L_H}{2} \|\w_1-\w_2\|^2 $$
In the corresponding reduced space, the Lipschitz continuity also holds with the same constant .
\begin{equation} \label{newLipschitz}
\| \hH(\z_2) (\z_1-\z_2) -(\hbg(\z_1) - \hbg(\z_2) )\| \leq \frac{L_H}{2} \|\z_1-\z_2\|^2
\end{equation}

\textbf{BFGS update formula in the reduced space} \\
If $B_0$ is in the $\T$ space, $B_t$ is also in the $\T$ space. This can be shown by re-formulating the BFGS update and mathematical induction,
\begin{equation}
B_t = U \hB_{t-1} U^T - \frac{U \hB_{t-1} U^T s_{t-1} s^T_{t-1}U \hB_{t-1} U^T }{ s_{t-1}^T U \hB_{t-1} U^T s_{t-1} } + \frac{UU^T y_{t-1}  y^T_{t-1}UU^T} { y_{t-1}^TU U^T s_{t-1}}
\end{equation}
Thus
\begin{equation}\label{newBFGS}
\hB_t = \hB_{t-1}  - \frac{\hB_{t-1} \hs_{t-1} \hs^T_{t-1} \hB_{t-1} }{ \hs_{t-1}^T  \hB_{t-1} \hs_{t-1} } + \frac{\hy_{t-1}  \hy^T_{t-1}} { \hy_{t-1}^T \hs_{t-1}}
\end{equation}
where $\hs = U^T s$, $\hy = U^T y$ and $U \hat B_t U^T= B_t$.
It can be proved that $\hB_t$ generated in \eqref{newBFGS} is positive definite provided $\hy^T\hs >0$ \cite{L-BFGS}. If we additionally assume $m \| \z \|^2 \leq \z^T \hB_t \z \leq M \| \z \|^2$ for any $\z \in \rR^{\hat d}$, then $B_t$ satisfies the CNSC-$\T$ condition.

\textbf{Iterate in the reduced space} \\
The potential new iterate $\w^{+}$ is
\begin{equation} \label{newIterate}
\w^{+} = \underset{\bv}{\text{argmin}} \textit{ }  \lambda \| \bv \|_1 + \frac{1}{2}(\bv-\w_t)^T B_t (\bv-\w_t) + \bg_t^T (\bv-\w_t)
\end{equation}
In the reduced space, the potential new iterate \eqref{newIterate} can be represented by,
\begin{equation}\label{nextIter}
\z^{+} = \underset{\x}{\text{argmin}} \textit{ }  h(\x) + \frac{1}{2}(\x-\z_t)^T \hat B_t (\x-\z_t) + \hat \bg_t^T (\x-\z_t)
\end{equation}

$\z^+$ and $\w^+$ also satisfy Equation \eqref{relationship}, i.e.
\begin{equation}\label{wt_zt}
 \w^+ = \underset{U^T \w = \z^+}{\text{argmin}} \textit{ }  \|\w\|_1
\end{equation}
In this paper, we consider the convergence phase when $\z_t$ is close enough to the optimum such that the unit step size is always chosen, i.e. $\z_{t+1} = \z^+$ \cite{newtontype_theory}.

\subsection{Global linear Convergence}
\begin{lemma}[Global linear Convergence]\label{global_linear}
For $\nabla \hell(\z)$ satisfying Lipschitz-continuity with a constant $L_g$ and $B_t$ satisfying CNSC-$\T$, the sequence $\{\z_t\}_{t = 1}^ {\infty}$ produced by Prox-QN method converges at least R-linearly. 
\end{lemma}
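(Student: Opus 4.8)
The plan is to carry everything into the reduced space $\mZ$ constructed above, where $\hell$ is $m$-strongly convex (its Hessian $\hH$ has minimal eigenvalue $m$) and the BFGS matrices obey $mI \preceq \hB_t \preceq MI$. Since $h$ is convex (shown earlier), $\hf = h + \hell$ is $m$-strongly convex with unique minimizer $\z^*$, so it will suffice to prove Q-linear decay of the objective gap $\hf(\z_t)-\hf(\z^*)$ and then upgrade to R-linear decay of $\{\z_t\}$. First I would record the basic descent identity for the reduced quasi-Newton step $\bd_t := \z^+-\z_t$ from \eqref{nextIter}: its optimality condition furnishes a subgradient $\xi\in\partial h(\z^+)$ with $\xi=-\hbg_t-\hB_t\bd_t$, and convexity of $h$ then gives
$$\Delta_t := \hbg_t^T\bd_t + h(\z^+)-h(\z_t) \;\le\; -\bd_t^T\hB_t\bd_t \;\le\; -m\|\bd_t\|^2,$$
so $\bd_t$ is a genuine descent direction with a quadratic lower bound controlled by $m$.

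Next I would establish a uniform lower bound on the line-search step. Bounding $\hell(\z_t+\alpha\bd_t)$ through the $L_g$-Lipschitz continuity of $\hbg$ and $h(\z_t+\alpha\bd_t)\le(1-\alpha)h(\z_t)+\alpha h(\z^+)$ through convexity, the Armijo condition $\hf(\z_t+\alpha\bd_t)\le \hf(\z_t)+\alpha\sigma\Delta_t$ reduces to $\tfrac{L_g\alpha}{2}\|\bd_t\|^2 \le (1-\sigma)(-\Delta_t)$, which by the descent identity is implied by $\alpha\le 2m(1-\sigma)/L_g$. Hence backtracking terminates with $\alpha_t \ge \bar{\alpha} := \min\{1,\,2\beta m(1-\sigma)/L_g\}>0$, yielding sufficient decrease $\hf(\z_{t+1})-\hf(\z_t)\le \bar{\alpha}\sigma\Delta_t$.

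The crux, and the step I expect to be the main obstacle, is to lower-bound $-\Delta_t$ by the objective gap without losing a factor exceeding one. Testing the subproblem minimizer naively against $\z^*$ leaves a remainder $\tfrac{M}{2}\|\z^*-\z_t\|^2$ whose only strong-convexity bound is $\tfrac{M}{m}(\hf(\z_t)-\hf(\z^*))$, which is too large to be useful. I would instead test the model \eqref{nextIter} against the interpolated point $\z_t+s(\z^*-\z_t)$; using convexity of $h$ and $m$-strong convexity of $\hell$, the surviving remainder carries the coefficient $s(sM-m)$, which vanishes at the exact choice $s=m/M\in(0,1]$. This produces $-\Delta_t \ge \tfrac{m}{M}\bigl(\hf(\z_t)-\hf(\z^*)\bigr)$, and substituting into the sufficient-decrease inequality gives the contraction
$$\hf(\z_{t+1})-\hf(\z^*) \;\le\; \Bigl(1-\bar{\alpha}\sigma\,\tfrac{m}{M}\Bigr)\bigl(\hf(\z_t)-\hf(\z^*)\bigr),$$
with ratio $\rho:=1-\bar{\alpha}\sigma\,m/M\in(0,1)$.

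Finally, $m$-strong convexity of $\hf$ gives $\tfrac{m}{2}\|\z_t-\z^*\|^2\le \hf(\z_t)-\hf(\z^*)\le \rho^{\,t}\bigl(\hf(\z_0)-\hf(\z^*)\bigr)$, so $\|\z_t-\z^*\|\le C\rho^{\,t/2}$, i.e. R-linear convergence. The one bookkeeping point I would verify is that the Armijo decrease of $f$ in the original $\w$-coordinates transfers to the stated decrease of $\hf$ in the reduced coordinates; this follows from the correspondence \eqref{wt_zt}, which identifies $h(\z^+)$ with $\lambda\|\w^+\|_1$ along the iterates so that $\Delta_t$ in $\z$-space matches the line-search model decrease in $\w$-space.
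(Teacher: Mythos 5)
Your proof is correct in substance but takes a genuinely different route from the paper's. The paper disposes of Lemma~\ref{global_linear} in three lines by invoking Theorem 2 of Tseng and Yun (reference [nonsmooth]) with the coordinate block taken to be the whole coordinate set, verifying their error-bound Assumption 2(a) via their Theorem 4; all the analytic work is outsourced to that framework. You instead give a self-contained argument in the reduced space: subgradient optimality of the model minimizer gives $\Delta_t \le -\bd_t^T \hB_t \bd_t \le -m\|\bd_t\|^2$; the descent lemma plus convexity of $h$ gives the step-size floor $\bar\alpha$; and the genuinely nontrivial step, which you isolate correctly, is testing the model at the interpolate $\z_t + s(\z^*-\z_t)$ with $s=m/M$ so that the quadratic remainder $\tfrac{s}{2}(sM-m)\|\z^*-\z_t\|^2$ vanishes, yielding $-\Delta_t \ge (m/M)\bigl(\hf(\z_t)-\hf(\z^*)\bigr)$ and hence Q-linear decay of the gap and R-linear decay of $\|\z_t-\z^*\|$. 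Your route buys an explicit rate $\rho = 1-\bar\alpha\sigma m/M$ in terms of $(m,M,L_g,\sigma,\beta)$, which the citation does not provide; the price is explicit use of the $m$-strong convexity of $\hell$, i.e.\ CNSC of $\ell$ itself, but the paper assumes this throughout the section and the Tseng--Yun route needs comparable hypotheses to certify the error bound, so the effective assumptions match.

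One wrinkle needs repair: your final bookkeeping step, transferring the Armijo decrease from $\w$-space to $\z$-space via \eqref{wt_zt}, is too quick. Equation \eqref{wt_zt} identifies $h(\z^+)$ with $\lambda\|\w^+\|_1$, but matching the two model decreases also requires $h(\z_t)=\lambda\|\w_t\|_1$, i.e.\ that $\w_t$ is a minimal-$\ell_1$ representative of $\z_t$ --- true after a unit step, but generally false after a backtracked step, since $\w_t=(1-\alpha_{t-1})\w_{t-1}+\alpha_{t-1}\w^+$ need not attain the minimum defining $h$; and this lemma is precisely about the globalization phase, where partial steps occur (indeed Lemma~\ref{dennis_more} consumes it before the unit-step phase is reached). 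Writing $e_t := \lambda\|\w_t\|_1 - h(\z_t)\ge 0$, one has $\Delta_t^{\w} = \Delta_t^{\z} - e_t$, and your contraction on $\hf(\z_t)$ picks up a spurious additive $(1-\alpha\sigma)e_t$. The fix stays entirely inside your framework: run the contraction on $F_t := f(\w_t)$ instead. Either repeat your interpolation test directly in $\w$-space against $\w_t+s(\w^*-\w_t)$, using the restricted strong convexity $\ell(\w^*) \ge \ell(\w_t)+\g_t^T(\w^*-\w_t)+\tfrac m2\|U^T(\w^*-\w_t)\|^2$, or simply note that $-\Delta_t^{\w} = -\Delta_t^{\z} + e_t \ge (m/M)\bigl(\hf(\z_t)-\hf(\z^*)+e_t\bigr) = (m/M)\bigl(F_t - f(\w^*)\bigr)$ since $m/M\le 1$. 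Then Armijo gives $F_{t+1}-f(\w^*) \le \bigl(1-\bar\alpha\sigma m/M\bigr)\bigl(F_t-f(\w^*)\bigr)$, and $\tfrac m2\|\z_t-\z^*\|^2 \le \hf(\z_t)-\hf(\z^*) \le F_t-f(\w^*)$ delivers the stated R-linear conclusion; your step-size floor transfers verbatim and is in fact the paper's Lemma~\ref{armijo_rule}.
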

\begin{proof}
This theorem follows Theorem 2 in \cite{nonsmooth}, where the coordinate block $J_k$ is chosen to be the whole coordinate set. Assumption 2(a) in \cite{nonsmooth} is satisfied because of Theorem 4 C4 in \cite{nonsmooth} by assuming $\nabla \hell(\z)$ is Lipschitz-continuous. Other conditions of  Theorem 2 in \cite{nonsmooth} can be easily justified. 
\end{proof}

\subsection{ Quadratic Convergence of Proximal Newton Method and Dennis-More Criterion}

\begin{lemma}[Quadratic Convergence of Prox-Newton (Theorem 3 in \cite{cnsc})] \label{thm_QC_proxNewton}
For $\ell (\w)$ satisfying CNSC-$\T$ with Lipschitz-continuous second derivative $H(\w)= \nabla^2 \ell (\w)$, the sequence $\{\w_t\}$ produced by proximal Newton Method in the quadratic convergence phase has
$$
\|\z_{t+1}-\z^*\| \leq \frac{L_H}{2m} \|\z_t-\z^*\|^2 ,
$$
where $\z^* = U^T \w^*$,  $\z_t = U^T \w_t$, $\w^*$ is the optimal solution and $L_H$ is the Lipschitz constant for $H(\w)$.
\end{lemma}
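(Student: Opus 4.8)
The plan is to carry out the whole argument in the reduced space $\mZ$, where the smooth part $\hell(\z)$ is genuinely strongly convex: by construction $\hH(\z) = U^T H(\w) U$ is positive definite with smallest eigenvalue at least $m$, so $\hf(\z) = h(\z) + \hell(\z)$ is $m$-strongly convex. The proximal \emph{Newton} iterate is the version of subproblem \eqref{nextIter} in which the BFGS matrix $\hB_t$ is replaced by the exact reduced Hessian $\hH(\z_t)$; that is, $\z_{t+1}$ minimizes $\hbg_t^T(\x - \z_t) + \frac12(\x - \z_t)^T \hH(\z_t)(\x - \z_t) + h(\x)$, and the relation \eqref{wt_zt} keeps the iteration consistent with the original variable $\w$.

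First I would record the first-order optimality conditions for the two relevant minimizers. For $\z_{t+1}$ it reads $-\hbg_t - \hH(\z_t)(\z_{t+1} - \z_t) \in \partial h(\z_{t+1})$, and for the optimum $\z^*$ of $\hf$ it reads $-\hbg(\z^*) \in \partial h(\z^*)$. Since $h$ was already shown to be convex, $\partial h$ is monotone, which collapses the two conditions into the single inequality
$$\bigl(\hbg(\z^*) - \hbg_t - \hH(\z_t)(\z_{t+1} - \z_t)\bigr)^T(\z_{t+1} - \z^*) \geq 0.$$
I would then add and subtract $\hH(\z_t)(\z_t - \z^*)$ to rewrite the left factor as $\boldsymbol{r} - \hH(\z_t)(\z_{t+1} - \z^*)$, where $\boldsymbol{r} := \hbg(\z^*) - \hbg_t + \hH(\z_t)(\z_t - \z^*)$ is exactly the Hessian-Lipschitz residual. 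The inequality becomes $\boldsymbol{r}^T(\z_{t+1} - \z^*) \geq (\z_{t+1} - \z^*)^T \hH(\z_t)(\z_{t+1} - \z^*) \geq m\|\z_{t+1} - \z^*\|^2$, where the last step is the CNSC lower bound. Applying Cauchy-Schwarz on the left and cancelling one factor of $\|\z_{t+1} - \z^*\|$ gives $\|\z_{t+1} - \z^*\| \leq \|\boldsymbol{r}\|/m$. Finally, applying the reduced-space bound \eqref{newLipschitz} with $\z_1 = \z^*$ and $\z_2 = \z_t$ yields $\|\boldsymbol{r}\| \leq \frac{L_H}{2}\|\z_t - \z^*\|^2$, and the two estimates combine to the claimed $\|\z_{t+1} - \z^*\| \leq \frac{L_H}{2m}\|\z_t - \z^*\|^2$.

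The main obstacle is careful bookkeeping rather than any deep idea. One must verify that the iteration genuinely stays in $\mZ$, so that $\hH(\z_t)$ really is invertible with eigenvalues in $[m, M]$ and the minimal-norm lifting \eqref{wt_zt} of $\z_{t+1}$ to $\w_{t+1}$ is well defined; and one must track signs correctly so that the rearranged residual $\boldsymbol{r}$ matches the exact quantity bounded by \eqref{newLipschitz}. It is worth emphasizing that the only property of the nonsmooth term being used is the convexity of $h$, hence monotonicity of $\partial h$; no separability or simplicity of $h$ is required, which is precisely what lets the argument survive the fact that $h(\z) = \min_{U^T \w = \z} \lambda\|\w\|_1$ is not a plain $\ell_1$ norm.
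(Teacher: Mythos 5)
Your proof is correct. Note, though, that the paper itself does not prove this lemma at all: it is imported verbatim as Theorem~3 of the cited CNSC paper \cite{cnsc}, so there is no internal proof to match against. What you have supplied is the classical quadratic-convergence argument for proximal Newton (in the style of Lee--Sun--Saunders), transplanted to the reduced space: optimality of $\z_{t+1}$ for the subproblem and of $\z^*$ for $\hf$, monotonicity of $\partial h$ to merge the two inclusions, the rearrangement that isolates the residual $\hbg(\z^*)-\hbg(\z_t)+\hH(\z_t)(\z_t-\z^*)$, the lower bound $(\z_{t+1}-\z^*)^T\hH(\z_t)(\z_{t+1}-\z^*)\geq m\|\z_{t+1}-\z^*\|^2$ from CNSC (valid since $U^TU=I$ gives $\z^T\hH\z=(U\z)^TH(U\z)\geq m\|U\z\|^2=m\|\z\|^2$), Cauchy--Schwarz, and the reduced-space Hessian-Lipschitz bound \eqref{newLipschitz}. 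All the supporting facts you lean on are either established in the paper (convexity of $h$, the correspondence \eqref{wt_zt}, the unit-step convention $\z_{t+1}=\z^+$ in the fast-convergence phase) or routine (attainment of the minimum defining $h$, since $\lambda\|\cdot\|_1$ is coercive and $\w=U\z$ is always feasible, so $h$ is finite-valued and $\partial h$ is everywhere nonempty and monotone; uniqueness of $\z^*$ and of the subproblem minimizer from $m$-strong convexity). Your closing observation is also the right one to emphasize: the argument uses only convexity of $h$, never separability or prox-friendliness, which is exactly why it survives $h$ being an infimal projection of the $\ell_1$ norm rather than an $\ell_1$ norm itself. So your proposal makes the paper's dependency chain self-contained where the authors chose to cite instead.
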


\begin{lemma}\label{dennis_more}
If $B_0 = U\hB_0 U^T$ satisfies CNSC-$\T$ condition, then $\hB_t$ generated by \eqref{newBFGS} satisfies the Dennis-More criterion \cite{dennis}, namely,
$$
\lim_{t\rightarrow\infty} \frac{\|(\hB_t-\hH^*)(\z_{t+1}-\z_t)\|}{\|\z_{t+1}-\z_t\|} = 0,
$$
where $ \hH^*  = \nabla^2 \hell(\z^*)$ and $\z^*$ is the optimal solution of \eqref{TObj}.
\end{lemma}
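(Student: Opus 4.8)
The plan is to reduce the claim to the classical Dennis--Moré analysis of BFGS \cite{dennis}, which applies because on the reduced space $\mZ$ the problem is genuinely strongly convex. First I would record that \eqref{newBFGS} is an ordinary dense BFGS update on $\rR^{\hat d}$: it enforces the secant equation $\hB_t \hs_{t-1} = \hy_{t-1}$ with $\hs_{t-1} = \z_t - \z_{t-1}$ and $\hy_{t-1} = \hbg(\z_t) - \hbg(\z_{t-1})$, and the curvature condition $\hy_{t-1}^T \hs_{t-1} \geq m \|\hs_{t-1}\|^2 > 0$ holds because $\hH$ has minimum eigenvalue $m$ on $\mZ$. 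Hence every $\hB_t$ is positive definite, and by Lemma \ref{global_linear} the iterates $\z_t$ converge to $\z^*$ at least R-linearly. These are exactly the hypotheses of the classical theorem, so the remaining work is to carry its argument through with the reduced-space quantities.

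The one estimate in which the geometry of $\hell$ enters is an approximate-secant bound at the solution. Writing $\hbg(\z_{t+1}) - \hbg(\z_t) = (\int_0^1 \hH(\z_t + \tau \hs_t)\, d\tau)\,\hs_t$ and subtracting $\hH^* \hs_t$, the $L_H$-Lipschitz continuity of the Hessian yields
$$\|\hy_t - \hH^* \hs_t\| \leq L_H\, \eps_t\, \|\hs_t\|, \qquad \eps_t := \max\{\|\z_t-\z^*\|,\, \|\z_{t+1}-\z^*\|\},$$
and $\eps_t \to 0$ by convergence of $\{\z_t\}$. This says the secant pairs $(\hs_t, \hy_t)$ become asymptotically consistent with the limiting Hessian $\hH^*$.

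With this in hand I would run the weighted-potential argument of \cite{dennis}. Passing to $\hH^*$-weighted coordinates, set $\tilde B_t = \hH^{*-1/2} \hB_t \hH^{*-1/2}$ and analyze the nonnegative potential $\psi(\tilde B_t) = \mathrm{tr}(\tilde B_t) - \ln\det(\tilde B_t)$. The BFGS recurrence, combined with the estimate above, produces an inequality of the form $\psi(\tilde B_{t+1}) \leq \psi(\tilde B_t) + c\,\eps_t - \delta_t$ with $\delta_t \geq 0$, where $\delta_t$ controls the deviation of the normalized quantities $\cos\theta_t$ and the Rayleigh quotient $\hs_t^T \hB_t \hs_t / \|\hs_t\|^2$ from their ideal values. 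Telescoping and using $\sum_t \eps_t < \infty$ (a consequence of R-linear convergence) forces $\sum_t \delta_t < \infty$; since $\delta_t$ dominates the squared Dennis--Moré ratio, we conclude
$$\frac{\|(\hB_t - \hH^*)\hs_t\|}{\|\hs_t\|} \longrightarrow 0.$$
Because $\hs_t = \z_{t+1} - \z_t$, this is exactly the asserted criterion.

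The main obstacle is the bookkeeping in the potential recurrence: one must verify that the two multiplicative corrections a BFGS step makes to $\psi$ degrade it by an amount controlled by $\eps_t$, so that telescoping isolates a single summable nonnegative quantity $\delta_t$ dominating the Dennis--Moré ratio. The only genuinely new ingredient relative to the textbook setting is the reduction itself: in the original space $B_t$ is rank-deficient and $H^*$ is singular, so the weighted potential $\psi$ is not even defined there; the argument succeeds only after projecting onto $\T$, where $\hH^*$ is nonsingular and \eqref{newBFGS} becomes a standard BFGS iteration.
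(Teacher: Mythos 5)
Your proposal is correct and takes essentially the same route as the paper: the paper also reduces the claim to the classical BFGS Dennis--Mor\'e result (Theorem 6.6 in \cite{L-BFGS}), verifying exactly the two hypotheses you establish --- Lipschitz continuity of $\hH$ inherited from that of $H$ via the projection, and $\sum_{t}\|\z_t-\z^*\|<\infty$ from the R-linear convergence of Lemma \ref{global_linear}. The only difference is presentational: the paper cites that theorem outright, while you additionally unpack its internal Byrd--Nocedal potential argument (secant-error bound, trace-minus-log-det potential, telescoping), correctly noting that the reduction to $\T$, where $\hH^*$ is positive definite, is what makes this machinery applicable.
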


\begin{proof}
We want to show that this proof can follow the proof of Theorem 6.6 in \cite{L-BFGS}. We will verify that the conditions of Theorem 6.6 in \cite{L-BFGS} are satisfied here. First, the Lipschitz continuity of $\hH (\z)$ is implied by Lipschitz continuity of $H(\w)$ :
\begin{align*}
\| (\hH(\z_1) - \hH(\z_2)) \| &= \| U^T(H(\w_1) - H(\w_2))U \| \\
&\leq  \| H(\w_1) - H(\w_2) \| \\
&=\| H(U \z_1) - H(U \z_2) \| \\
& \leq L_H\|  \z_1 - \z_2 \|
\end{align*}
where the last inequality is from the Lipschitz continuity of $H(\w)$. The second condition, $\sum_{t =0}^{\infty} \|\z_t - \z^* \| < \infty$, is implied by the global linear convergence(Lemma \ref{global_linear}).
\end{proof}

\subsection{Asymptotic Superlinear Convergence}\label{proof_theorem1_2}

\textbf{Proof of Theorem 1}
\begin{proof}
If $B_t$ satisfies CNSC-$\T$ condition, then $\hB_t$ satisfies $m \| \z \|^2 \leq \z^T \hB_t \z \leq M \| \z \|^2$ for any $\z \in \rR^{\hat d}$. The Lipschitz-continuous $H$ implies Lipschitz-continuity of $\hH$. Therefore by applying the Prox-QN method in the reduced space, this theorem follows Theorem 3.7 in \cite{newtontype_theory}, Lemma \ref{dennis_more} and Lemma \ref{thm_QC_proxNewton}.
\end{proof}

\textbf{Proof of Theorem 2}
\begin{proof}
We prove this theorem by showing $| \ell(\w_t) - \ell(\w^*) |  \leq L_{\ell} \| \z_t - \z^* \| $ and $ \|\w_t \|_1- \| \w^* \|_1 \leq \sqrt{d} \|\z_t - \z^*\|$. The first part is given by,
$$
| \ell(\w_t) - \ell(\w^*) |
=  | \ell(UU^T \w_t) - \ell(UU^T\w^*) | 
\leq L_{\ell} \| UU^T (\w_t - \w^*) \| 
 =L_{\ell} \| \z_t - \z^* \|
 $$
where the inequality comes from the Lipschitz-continuity of $\ell(\w)$.
In the super-linear convergence phase, the unit step size is chosen, so each iterate satisfies \eqref{wt_zt}. We have $\| \w_t \|_1 \leq \| UU^T \w_t + (I-UU^T) \w^* \|_1 $. Moreover, due to the Lipschitz-continuity of $\ell_1$ norm, which is $\|\w\|_1 - \|\bv\|_1  \leq \sqrt{d} \|\w - \bv\| $, we have,
\begin{align*}
 \| UU^T \w_t + (I-UU^T) \w^* \|_1
 & \leq \| \w^* \|_1 +  \sqrt{d} \| UU^T \w_t - UU^T \w^* \| \\
&\leq \| \w^* \|_1 +  \sqrt{d} \| \z_t - \z^* \|
\end{align*}
\end{proof}

\subsection{Global Convergence with Shrinking}\label{shrinking_proof}
In Theorem 1, we assume shrinking strategy is not employed and the inner problem is solved exactly. In this subsection, we show that by only assuming the inner problem is solved exactly, Prox-QN method with shrinking will still globally converge to the optimum under the CNSC-$\T$ condition. We first prove that with sufficient small step size, the Armijo rule will be satisfied. 

\begin{lemma}\label{armijo_rule}
If the step size, $$\alpha \leq \min{\{ 1, \frac{m}{L_1}(1-\sigma) \}}$$ 
then the Armijo rule is satisfied, i.e., $$f(\w+\alpha \bd) \leq f(\w) + \alpha \sigma(\lambda \|\w+ \bd\|_1 - \lambda \|\w\|_1 + \g^T \bd)$$
where $L_1$ is the Lipschitz-continuity constant. 
\end{lemma}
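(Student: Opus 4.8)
The plan is to upper bound the change in objective $f(\w+\alpha\bd)-f(\w)$ by a term of the form $\alpha\Delta$ plus a quadratic correction in $\alpha$, where $\Delta := \g^T\bd + \lambda(\|\w+\bd\|_1-\|\w\|_1)$ is exactly the predicted decrease appearing on the right-hand side of the Armijo rule. Once such a bound is available, satisfying the Armijo inequality reduces to choosing $\alpha$ small enough that the quadratic remainder is dominated by $(1-\sigma)$ times the (negative) first-order term.

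First I would split $f(\w+\alpha\bd)-f(\w)$ into its smooth and nonsmooth parts. For the smooth part, the Lipschitz continuity of $\nabla\ell$ with constant $L_1$ gives the descent-lemma estimate $\ell(\w+\alpha\bd)-\ell(\w)\le \alpha\,\g^T\bd+\tfrac{L_1}{2}\alpha^2\|\bd\|^2$. For the nonsmooth part, I would use convexity of the $\ell_1$ norm together with the identity $\w+\alpha\bd=(1-\alpha)\w+\alpha(\w+\bd)$, which yields $\|\w+\alpha\bd\|_1-\|\w\|_1\le \alpha(\|\w+\bd\|_1-\|\w\|_1)$ for $\alpha\in[0,1]$. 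Adding the two estimates produces the desired decomposition $f(\w+\alpha\bd)-f(\w)\le \alpha\Delta+\tfrac{L_1}{2}\alpha^2\|\bd\|^2$.

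The main step is then to relate $\Delta$ to $\|\bd\|^2$ so the quadratic term can be absorbed. I would use the optimality of the inner subproblem \eqref{inner}: since $\bd$ minimizes the regularized quadratic model, $\0\in\g+B\bd+\lambda\,\partial\|\w+\bd\|_1$, so there is a subgradient $\boldsymbol\xi\in\partial\|\w+\bd\|_1$ with $\g+B\bd+\lambda\boldsymbol\xi=\0$. The subgradient inequality for the convex function $\|\cdot\|_1$ gives $\boldsymbol\xi^T\bd\ge\|\w+\bd\|_1-\|\w\|_1$, and combining these two facts yields $\Delta\le(\g+\lambda\boldsymbol\xi)^T\bd=-\bd^T B\bd$. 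This is where the CNSC-$\T$ structure enters and is the part I expect to be delicate: $B$ is only positive definite on $\T$, so $\bd^T B\bd$ need not dominate $\|\bd\|^2$ when $\bd$ has a component in $\T^{\perp}$. The resolution is that $\ell$ depends only on $\proj_\T(\w)$, so the descent-lemma term may be sharpened to $\tfrac{L_1}{2}\alpha^2\|\proj_\T(\bd)\|^2$, while $\bd^T B\bd=\proj_\T(\bd)^T B\,\proj_\T(\bd)\ge m\|\proj_\T(\bd)\|^2$; thus both sides are measured in the same $\T$-component and match.

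Finally I would combine $f(\w+\alpha\bd)-f(\w)\le\alpha\Delta+\tfrac{L_1}{2}\alpha^2\|\proj_\T(\bd)\|^2$ with $-\Delta\ge\bd^T B\bd\ge m\|\proj_\T(\bd)\|^2$. The Armijo inequality $f(\w+\alpha\bd)-f(\w)\le\alpha\sigma\Delta$ holds as soon as $\tfrac{L_1}{2}\alpha\|\proj_\T(\bd)\|^2\le(1-\sigma)(-\Delta)$, for which $\alpha\le\tfrac{2m}{L_1}(1-\sigma)$ suffices; the stated threshold $\alpha\le\min\{1,\tfrac{m}{L_1}(1-\sigma)\}$ is therefore sufficient (and conservative), with the cap at $1$ ensuring $\alpha\in[0,1]$ so that the convexity step above remains valid.
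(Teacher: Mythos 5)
Your proof is correct and follows the same overall architecture as the paper's: split $f(\w+\alpha\bd)-f(\w)$ into the smooth part and the $\ell_1$ part, handle the $\ell_1$ part by convexity via $\w+\alpha\bd=(1-\alpha)\w+\alpha(\w+\bd)$, measure the quadratic remainder of the smooth part in the $\T$-component (the paper writes it as $\tfrac{L_1\alpha}{2}\|U^T\bd\|^2$, obtained from the integral form of the descent lemma together with $\nabla\ell(\w)=\nabla\ell(\proj_{\T}(\w))$), and absorb it using $\bd^T B\bd \geq m\|U^T\bd\|^2$ from CNSC-$\T$ — your observation that the $\T^{\perp}$ component of $\bd$ is invisible to both sides is exactly the point the paper exploits. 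The one genuine difference is the key step relating $\Delta$ to $\bd^T B\bd$: you invoke the first-order optimality condition $\0\in\g+B\bd+\lambda\partial\|\w+\bd\|_1$ and the subgradient inequality to get $-\Delta\geq\bd^T B\bd$, whereas the paper simply compares the model value at $\bd$ with that at $\0$ (since $\bd$ is the exact minimizer of the subproblem) to get the weaker $-\Delta\geq\tfrac{1}{2}\bd^T B\bd$. Your bound is stronger by a factor of $2$, which is why you correctly find the stated threshold conservative (your argument admits $\alpha\leq\tfrac{2m}{L_1}(1-\sigma)$), while the paper's value-comparison argument needs exactly $\alpha\leq\tfrac{m}{L_1}(1-\sigma)$ as stated; conversely, the paper's step is slightly more elementary (no subdifferential calculus needed) and extends more readily to inexact subproblem solutions, where a stationarity condition is unavailable but an objective-decrease comparison against $\Delta=\0$ may still hold.
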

\begin{proof}
Let $\w^+ = \w + \alpha \bd$, 
\begin{align*}
f(\w^+) - f(\w) &= \ell(\w^+) - \ell(\w) + \lambda (\|\w^+\|_1 - \|\w\|_1) \\
& \leq \int_0^1 \nabla \ell(\w + s \alpha \bd) (\alpha \bd)ds + \alpha \lambda \|\w + \bd \|_1 +(1-\alpha ) \lambda \|\w  \|_1 -  \lambda \|\w \|_1 \\
& =  \alpha (\nabla \ell(\w)^T \bd + \lambda \|\w + \bd \|_1 - \lambda \|\w  \|_1)  + \alpha \int_0^1\bd^T (\nabla \ell(\w + s \alpha \bd) - \nabla \ell(\w))  ds \\
& \leq  \alpha (\nabla \ell(\w)^T \bd + \lambda \|\w + \bd \|_1 - \lambda \|\w  \|_1)  + \alpha \int_0^1 \|U^T \bd \| \| \nabla \ell(\w + s \alpha \bd) - \nabla \ell(\w)\|  ds 
\end{align*}
Because
$$\| \nabla \ell(\w + s \alpha \bd) - \nabla \ell(\w)\|  = \| \nabla \ell(UU^T \w + s \alpha UU^T \bd) - \nabla \ell(UU^T \w)\| \leq s L_1 \|U^T\bd\| $$
we have 
\begin{align*}
f(\w^+) - f(\w) & \leq  \alpha \left( (\nabla \ell(\w)^T \bd + \lambda \|\w + \bd \|_1 - \lambda \|\w  \|_1)  + \frac{L_1 \alpha}{2} \|U^T\bd\|^2 \right)
\end{align*}
For $\alpha \leq \min{\{ 1, \frac{m}{L_1}(1-\sigma) \}}$, $$\frac{L_1 \alpha}{2} \|U^T\bd\|^2  \leq \frac{m}{2}(1-\sigma) \|U^T\bd\|^2  \leq \frac{1-\sigma}{2} \bd^T B \bd $$
As $\bd$ minimizes Eq. (2) in the main paper, we have $\frac{1}{2} \bd^T B \bd \leq - (\nabla \ell(\w)^T \bd + \lambda \|\w + \bd \|_1 - \lambda \|\w  \|_1)$. So we obtain the sufficient descent condition,
$$f(\w^+) - f(\w)  \leq  \alpha \sigma \left( \nabla \ell(\w)^T \bd + \lambda \|\w + \bd \|_1 - \lambda \|\w  \|_1   \right)$$

\end{proof}

\begin{proposition}
Assume $\nabla^2 \ell (\w) $ and $\nabla \ell(\w)$ are Lipschitz continuous. Let $\{ B_t \}_{t=1,2,3...}$ be the matrices generated by BFGS update. Then if $ \ell (\w) $ and $B_t$ satisfy CNSC-$\T$ condition and the inner problem is solved exactly, the proximal quasi-Newton method with shrinking has global convergence. 
\end{proposition}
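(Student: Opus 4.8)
The plan is to lift everything to the reduced space $\mZ$, where $\hell$ is strongly convex by CNSC-$\T$, combine the sufficient–descent guarantee of Lemma~\ref{armijo_rule} with this strong convexity to force the reduced steps to vanish, and then use the epoch structure of the shrinking strategy to promote optimality over the current working set into optimality over \emph{all} coordinates in the limit. All estimates are stated for the reduced iterates $\z_t = U^T\w_t$.

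First I would show that backtracking produces a step size bounded away from zero: Lemma~\ref{armijo_rule} gives the Armijo inequality for every $\alpha \le \min\{1,\tfrac{m}{L_1}(1-\sigma)\}$, so the accepted step obeys $\alpha_t \ge \bar\alpha := \beta\min\{1,\tfrac{m}{L_1}(1-\sigma)\}>0$. Since $\bd_t$ minimizes the (working-set-restricted) quadratic model and $\bd=\0$ is feasible, one has $\tfrac12\bd_t^T B_t\bd_t \le -\Delta_t$, exactly as in the proof of Lemma~\ref{armijo_rule}. Writing $B_t = U\hB_t U^T$ and using CNSC-$\T$ gives $\bd_t^T B_t\bd_t = (U^T\bd_t)^T\hB_t(U^T\bd_t) \ge m\|U^T\bd_t\|^2$, so the Armijo descent yields
\[
f(\w_t)-f(\w_{t+1}) \ge \bar\alpha\,\sigma\,(-\Delta_t) \ge \tfrac{\bar\alpha\sigma m}{2}\,\|U^T\bd_t\|^2 .
\]
Because $f$ is bounded below by $f(\w^*)$ and the sequence $f(\w_t)$ is nonincreasing, the telescoped sum is finite, whence $\sum_t\|U^T\bd_t\|^2<\infty$ and $\|U^T\bd_t\|\to 0$. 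I would also note the reduced iterates stay bounded: $\ell(\w_t)=\hell(\z_t)\le f(\w_t)\le f(\w_0)$ and $\hell$ is coercive by strong convexity, so the $\z_t$ lie in a compact sublevel set and admit convergent subsequences.

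Next comes the step that actually uses the shrinking mechanism. I would restrict attention to the iterations that \emph{begin a new epoch}, at which all shrunk variables are returned and the full gradient is evaluated, so the inner problem is solved over the entire coordinate set; there $U^T\bd_t = \z^+ - \z_t$ is the exact reduced proximal step~\eqref{nextIter}, with optimality condition $\0\in\partial h(\z^+)+\hB_t(\z^+-\z_t)+\hbg_t$. Passing to the limit along a convergent subsequence of such epoch-start iterates $\z_t\to\z^\infty$, and using $\|U^T\bd_t\|\to 0$, the uniform bound $mI\preceq\hB_t\preceq MI$, Lipschitz continuity of $\hbg$, and closedness of $\partial h$, I obtain $\0\in\partial h(\z^\infty)+\hbg(\z^\infty)$, i.e. $\z^\infty$ is the unique minimizer $\z^*$ of~\eqref{TObj}. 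By~\eqref{relationship} the associated $\w$ minimizes~\eqref{generalObj}, and monotonicity of $f(\w_t)$ forces the whole objective sequence to converge to the optimal value.

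The main obstacle is precisely the legitimacy of this subsequence argument: since shrinking confines $\bd_t$ to a strictly decreasing working set $\mA_t\subset\mA_{t-1}$, the vanishing of $\|U^T\bd_t\|$ only certifies optimality relative to the current working set, not the full problem, so the passage to the limit is only valid along full-set iterations. The argument therefore hinges on there being infinitely many epoch-start iterations, and I would secure this by ruling out the alternative: within a single epoch the working sets form a decreasing chain of finite sets and must stabilize, after which Prox-QN runs without shrinking on a fixed coordinate set and converges R-linearly by Lemma~\ref{global_linear}; hence the shrinking stopping criterion is eventually met, all variables are brought back, and a new epoch begins. This either produces the required infinite subsequence of full-set iterations or terminates finitely at a point satisfying the full optimality condition.
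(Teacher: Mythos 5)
Your skeleton is essentially the paper's: both arguments combine Lemma~\ref{armijo_rule} with a backtracking lower bound on the step size, use $\tfrac12\bd_t^TB_t\bd_t\le-\Delta_t$ and CNSC-$\T$ to turn the telescoped Armijo descent at epoch-start iterations into $\|U^T\bd_{t_k}\|\to 0$, and rely on the epoch structure to make those full-working-set iterations the carriers of optimality. You diverge only in the endgame: the paper upgrades $U^T\bd_{t_k}\to\0$ to $\bd_{t_k}\to\0$ in the \emph{full} space (via the observation that a direction in $\T^\perp$ is accepted with unit step and annihilates the next direction) and then invokes Proposition 2.5 of \cite{newtontype_theory}, whereas you stay in the reduced space and pass to the limit in the optimality condition $\0\in\partial h(\z^+)+\hB_t(\z^+-\z_t)+\hbg_t$. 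That limit argument (bounded $\hB_t$, Lipschitz $\hbg$, closedness of $\partial h$, uniqueness of $\z^*$ by strong convexity of $\hf$) is sound, and your accounting for why there are infinitely many epoch starts is more explicit than the paper's.

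The genuine gap is your final inference that ``monotonicity of $f(\w_t)$ forces the whole objective sequence to converge to the optimal value.'' From $\z_{t_k}\to\z^*$ you only obtain $\hf(\z_{t_k})\to\hf(\z^*)=f(\w^*)$, but $f(\w_{t_k})=\lambda\|\w_{t_k}\|_1+\hell(\z_{t_k})\ \ge\ h(\z_{t_k})+\hell(\z_{t_k})=\hf(\z_{t_k})$, and nothing in your argument controls the $\T^\perp$ component of the iterates: property \eqref{wt_zt} holds for the exact inner solution $\w^+$, not for $\w_t+\alpha_t\bd_t$ with $\alpha_t<1$, so in the global phase $\w_{t_k}$ need not be the minimal-$\ell_1$ representative of $\z_{t_k}$, and the gap $\lambda\|\w_{t_k}\|_1-h(\z_{t_k})$ could a priori remain bounded away from zero, leaving $\lim_t f(\w_t)$ strictly above $f(\w^*)$. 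Controlling exactly this $\T^\perp$ residual is what the paper's (terse) extra step $\bd_{t_k}\to\0$ is for, and your reduced-space route never touches it. The hole is fixable within your framework: at an epoch start the pure-$\T^\perp$ displacement to the minimal representative $\bv^\circ=\arg\min_{U^T\w=\z_{t_k}}\|\w\|_1$ has zero linear and quadratic model cost (since $\g_{t_k}\in\T$ and $B_{t_k}\bv=\0$ for $\bv\in\T^\perp$), so exactness of the inner solve gives
\[
\g_{t_k}^T\bd_{t_k}+\tfrac12\bd_{t_k}^TB_{t_k}\bd_{t_k}+\lambda\|\w_{t_k}+\bd_{t_k}\|_1\ \le\ \lambda\|\bv^\circ\|_1=h(\z_{t_k}),
\]
whence $\lambda\|\w_{t_k}\|_1-h(\z_{t_k})\le-\Delta_{t_k}-\tfrac12\bd_{t_k}^TB_{t_k}\bd_{t_k}\le-\Delta_{t_k}\to 0$ (the $-\Delta_{t_k}$ are nonnegative and summable by your own descent estimate), which yields $f(\w_{t_k})\to f(\w^*)$ and closes the argument. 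As written, however, the conclusion does not follow from what you proved.
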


\begin{proof}
Our algorithm allows all the variables to re-enter the working set at the beginning of each epoch. And before it terminates all the variables must be checked. Thus as many as epochs are taken in the optimization procedure until the global stopping criterion is attained. Let's denote $\{t_k\}_{k=0,1,2,3...}$ to be the iterations when an epochs begins. In these iterations, all the variables are taken into consideration. 
As shown in Lemma \ref{armijo_rule}, there exists some constant $\alpha_0$, $$f(\w_{t_k+1}) - f(\w_{t_k})  \leq \alpha_0 \sigma \left( \nabla \ell(\w_{t_k})^T \bd_{t_k} + \lambda \|\w_{t_k} + \bd_{t_k} \|_1 - \lambda \|\w_{t_k}  \|_1   \right)$$

And as in each epoch the function value is non-increasing across the iterations, i.e. for any $k$, $f(\w_{t_{k+1}}) \leq f(\w_{t_{k}+1}) $. Thus, we have
$$f(\w_{t_K+1}) - f(\w_{t_0}) \leq \sum_{k=0}^K f(\w_{t_k+1}) - f(\w_{t_k}) \leq - \alpha_0 \sigma  \sum_{k=0}^K\bd_{t_k}^T B_{t_k} \bd_{t_k}$$
As $f(\w_{t_K+1}) - f(\w_{t_0}) > -\infty$, $\lim_{k \rightarrow \infty} \bd_{t_k}^T B_{t_k} \bd_{t_k} = 0$. Thus, $U^T \bd_{t_k} \rightarrow \0 $. That is to say, $\lim_{k\rightarrow \infty}\bd_{t_k} \in \T^{\perp} $. If $\bd_{t} \in \T^{\perp}$, the line search procedure will always pick unit step size. And in the next iteration, $\bd_{t+1} = 0$. So when $U^T \bd_{t_k} \rightarrow \0$, we also have $\bd_{t_k} \rightarrow \0$. Therefore, $\w_{t_k}$ converges to the optimum according to Proposition 2.5 in \cite{newtontype_theory}.

\end{proof}

\clearpage
\section{Algorithm Details}\label{detail_alg}

\begin{algorithm}[h!]
\caption{Proximal Quasi-Newton Algorithm}
\label{longalg}
\begin{algorithmic}[1]

\Require Observations $\{ \x^{(i)} \}_{i=1,2,...,N}$, labels $\{ \y^{(i)} \}_{i=1,2,...,N}$, termination criterion $\epsilon$, scalar $\lambda$ and L-BFGS memory size $m$.
\Ensure $\w^*$ converging to arg min$_{\w} f(\w)  $
\State Initialize $\gamma =1$, $\w \leftarrow \mathbf{0}$, $\g \leftarrow \partial \ell(\w) / \partial \w$, working set $\mA \leftarrow \{ 1,2,...d \}$, $\hat M \leftarrow \infty$, and $S$, $Y$, $Q$, $\hat Q$ $\leftarrow \phi$.
\For{$n = 0,1,...$}
	\State $\hat \mA \leftarrow \mA$, $\mA \leftarrow \phi$, $M \leftarrow 0$
	\For{$j$ in $\hat \mA$} \Comment{Shrink the working set}
		\State  calculate $\partial_j f$
		\begin {equation}\label{subg}
		\partial_j f(\w) =
			\begin{cases}
				 g_j + \text{sgn}(w_j)\lambda & \text{if }w_j \neq 0 \\
		 		\text{sgn}( g_j )\max\{ |g_j| - \lambda,0 \} & \text{if } w_j =0
			\end{cases}
		\end{equation}
		\If {$w_j \neq 0$ or $|g_j| -\lambda +\hat M/N > 0$}
			\State $\mA \leftarrow \mA \cup j$, $ M \leftarrow \max\{ M,|\partial_j f |\}$
		\EndIf
	\EndFor
	\State $\hat M \leftarrow M$
	\If{ Shrinking stopping criterion attained }\Comment{Check shrinking stopping criterion}
		\If{ Stopping criterion attained and $|\hat \mA| = d$  }\Comment{Check global stopping criterion}
			\State return $\w$
		\Else
			\State $\g \leftarrow \partial \ell(\w) / \partial \w$, $\mA \leftarrow \{ 1,2,...d \}$ and $S$, $Y$, $Q$, $\hat Q$ $\leftarrow \phi$
			\State Update shrinking stopping criterion and then continue
		\EndIf
	\EndIf
	\State $\bd \leftarrow \mathbf{0}$, $\hat \bd \leftarrow \mathbf{0}$
	\State Compute $ inner\_ iter  = \min\{ max\_ inner, \lfloor \frac{d}{|\mA|} \rfloor \}$
	\For{$p = 1,2,...inner\_iter$} \Comment{Solve inner problem}
		\For{$j$ in $\mA$}
			\State $B_{jj} = \gamma - \bq^T_j\hat \bq_j$, $(Bd)_j = \gamma d_j - \bq_j^T \hat \bd$
			\State $a=(B_t)_{jj} $, $b=(\g_t)_j + (B_t \bd)_j$ and $c= (\w_t)_j + d_j$
			\State Compute $z$ according to $z = -c +\mathcal{S} (c-b/a,\lambda/a)$
			\State $d_j \leftarrow d_j+z$, $\hat \bd \leftarrow \hat \bd + z \hat \bq_j$
		\EndFor
	\EndFor	
	\For{$\alpha = \beta^0,\beta^1,....$}\Comment{Conduct line search}
		\If{$f(\w+\alpha \bd) \leq f(\w) + \alpha \sigma(\lambda \|\w+ \bd\|_1 - \lambda \|\w\|_1 + \g^T \bd)$}
		\State break
		\EndIf
	\EndFor
	\For{$j$ in $\mA$}
		\State $g^{new}_j = \partial \ell(\w)/\partial w_j$, $y_j = g^{new}_j - g_j$, $s_j = \alpha d_j$, $g_j = g^{new}_j$
	\EndFor	
	\State Update $S$, $Y$ and $Q$ just on the rows corresponding to $\mA$.
	\State Update $\gamma$, $D$, $L$, $S^TS$ where the inner product between $\mathbf{s}$ and another vector is computed just over $\mA$.
	\State Update $R$ and then update $\hat Q$ just on the columns corresponding to $\mA$.
\EndFor
\end{algorithmic}
\end{algorithm}

\section{Proof of Theorem 3} \label{crfCNSC}

\begin{proof}
The Hessian of $\ell(\w)$ for CRF MLEs is
\begin{equation}\label{CRF_Hessian}
H = \sum_{i=1}^{N} \left( E \left[ \phi(\y,\x^{(i)})  \phi(\y,\x^{(i)})^T \right]-E \left[ \phi(\y,\x^{(i)} )\right] E \left[ \phi(\y,\x^{(i)} )\right]^T \right),
\end{equation}
where $\phi(\y,\x^{(i)}) = \left[ f_1( \y,\x^{(i)} ),f_2( \y,\x^{(i)} ),...,f_d( \y,\x^{(i)} ) \right]^T $ and $E$ is the expectation over the conditional probability $P_{\w} (\y | \x^{(i)})$.
Now we re-formulate \eqref{CRF_Hessian} to
$$
H = \Phi D \Phi
$$
Here $D \in \rR^{ (N|\mY|) \times (N|\mY|)}$ is a diagonal matrix with diagonal elements $D_{nn} = P_{\w} (\y_l | \x^{(i)}) $, where $n = (i-1) |{\mY}| + l$ and $l = 1,2,..,|\mY|$.
$\Phi$ is a $d \times (N|\mY|) $ matrix whose column $n$ is defined as $\Phi_n = \phi(\y_l,\x^{(i)}) - E \left[ \phi(\y,\x^{(i)} \right] $ for $n =  (i-1)|\mY| + l$.

The theorem holds because of the following four reasons. \\
\textbf{a. $\mN$ is constant with respect to $\w$. }\\
$\mN$ is equivalent to
\begin{equation}\label{charN}
 \mN = \{ \ba \in \rR^{d} |\forall i, \exists \text{ some constant } b_i,  \langle \ba,\phi(\y,\x^{(i)}) \rangle = b_i \text{ for } \forall \y \}
 \end{equation}
 Thus $\mN$ is independent on $\w$ and so is $\T$.
\\
\textbf{b. $\ell(\w)$ depends only on $\z = \proj_{\T}(\w)$. }\\
Let $\w = \z+\bu$. So $\bu \in \mN$.
\begin{align*}
P_{\w} (\y^{(i)} | \x^{(i)}) &= \frac{\exp\left\{ \langle \w,\phi(\y^{(i)},\x^{(i)}) \rangle \right\}}{\sum_{\y}\exp\left\{  \langle \w,\phi(\y,\x^{(i)}) \rangle\right\}} \\
&= \frac{\exp\left\{  \langle \z,\phi(\y^{(i)},\x^{(i)}) \rangle \right\} \exp\left\{  \langle \bu,\phi(\y^{(i)},\x^{(i)}) \rangle \right\}}{\sum_{\y}\exp\left\{  \langle \z,\phi(\y,\x^{(i)}) \rangle \right\}\exp\left\{  \langle \bu,\phi(\y,\x^{(i)}) \rangle \right\}} \\
&= \frac{\exp\left\{ \langle \z,\phi(\y^{(i)},\x^{(i)}) \rangle \right\}}{\sum_{\y}\exp\left\{  \langle \z,\phi(\y,\x^{(i)}) \rangle\right\}} \\
\end{align*}
The last equality comes from the character of $\mN$, Equation \eqref{charN}.
\\

\textbf{c. The first property Eq. \eqref{CNSC_row} holds.}\\
 $D_{nn} \rightarrow 0$ iff $\|\w\|_1 \rightarrow \infty$ which is prohibited by $\ell_1$ penalty. Thus there exists $m_p >0$ such that $D_{nn} \geq m_p$  for any $n$. Hence, the positive definiteness of $H$ is determined by $\Phi$. \\
So we have for any $\bv \in  \T$,
$$m_p \lambda_{min} (\Phi \Phi^T) \| \bv \|^2 \leq m_p \bv^T \Phi \Phi^T \bv \leq \bv^T H \bv \leq \bv^T \Phi \Phi^T \bv \leq  \lambda_{max} (\Phi \Phi^T) \| \bv \|^2$$
where $\lambda_{min} (\Phi \Phi^T)$ is the minimum nonzero eigenvalue of $\Phi \Phi^T$ and $\lambda_{max} (\Phi \Phi^T)$ is the maximum eigenvalue of $\Phi \Phi^T$.
\\
\textbf{d. The second property Eq. \eqref{CNSC_null} holds.} \\
This property directly follows the definition of $\mN$.

\end{proof}

\section{Gradient evaluation in sequence labeling and hierarchical classification}
The gradients for general CRF problems are given by
\begin{equation}\label{grad}
\frac{\partial \ell(\w)}{\partial w_k} = \sum_{i=1}^{N} \left( \sum_{\y \in \mathcal{Y}} P_{\w} (\y | \x^{(i)}) f_k(\y,\x^{(i)}) - f_k(\y^{(i)},\x^{(i)}) \right)
\end{equation}
\subsection{Sequence labeling}\label{seq_gradient}
The partial gradients of $\ell(\w)$ for sequence labeling problem are,
\begin{align}
 \frac{\partial l(\Theta,\Lambda)}{\partial \Theta_{y,j}}  &= \sum_{i=1}^{N}  \sum_{t=1}^{T^{(i)}} \left(  P_{\w}(y_t = y|\x^{(i)})  -\1\left[ y^{(i)}_t = y\right] \right) x^{(i)}_{tj} \label{seqgrad} \\
 \frac{\partial l(\Theta,\Lambda)}{\partial \Lambda_{y,y'}} &= \sum_{i=1}^{N}   \sum_{t=1}^{T^{(i)}-1} \left( P_{\w}(y_t = y, y_{t+1} = y'|\x^{(i)}) - \1 \left[ y^{(i)}_t = y, y^{(i)}_{t+1} = y'\right] \right) \label{seqgrad2}
\end{align}
The forward-backward algorithm is a popular inference oracle for evaluating the marginal probability in Equation \eqref{seqgrad} and \eqref{seqgrad2}. In our OCR model, the forward-backward algorithm is

\begin{equation*}
\begin{cases}
 \alpha_1(y) = \exp( \Theta^T_y \x_1) \\
 \alpha_{t+1}(y) = \sum_{y'} \alpha_t(y') \exp(\Theta^T_y \x_{t+1} + \Lambda_{y',y})
\end{cases}
\end{equation*}

\begin{equation*}
\begin{cases}
 \beta_T(y) = 1 \\
 \beta_{t}(y') = \sum_{y} \beta_{t+1} (y) \exp( \Theta^T_y \x_{t+1} + \Lambda_{y',y})
\end{cases}
\end{equation*}

where $\Theta^T_y$ is the y-th row of the matrix $\Theta$. Then the marginal conditional probabilities are given by
\begin{align*}
P_{\w} (y_t = y', y_{t+1} = y | \x) &= \frac{1}{ Z_{\w} (\x) }\alpha_t(y') \exp( \Theta_y \x_{t+1} + \Lambda_{y',y}) \beta_{t+1}(y) \\
P_{\w} (y_t = y | \x) &= \frac{1}{ Z_{\w} (\x) }\alpha_t(y) \beta_t(y),
\end{align*}
where the normalization factor $Z_{\w} (\x)$ can be computed by $\sum_{y} \alpha_T(y)$.

\subsection{Hierarchical classification}\label{hier_gradient}
The partial gradients of $\ell(W)$ for hierarchical classification problem are,
$$\frac{\partial \ell(W)}{\partial W_{k,j}} =  \sum_{i=1}^{N}  \left( \sum_{y \in \Y} \1 \left[ k \in \text{Path}(y)\right] P_W(y|\x^{(i)}) - \1 \left[ k\in \text{Path}(y^{(i)}) \right] \right)x^{(i)}_{j} $$

They can be evaluated by the downward-upward algorithm. Let $\alpha(k)$ and $\beta(k)$ be the downward message and upward message respectively.

\begin{equation*}
\begin{cases}
 \alpha(root) = \w_{root}^T \x  \\
 \alpha(k) = \alpha \left(\text{parent}(k) \right) + \w_k^T \x
\end{cases}
\end{equation*}

\begin{equation*}
\begin{cases}
 \beta(k) = \alpha(k) / \sum_{y \in Y} \alpha(y)  & \text{if $k$ is a leaf node}\\
 \beta(k) = \sum_{k' \in \text{children}(k)} \beta(k')& \text{if $k$ is a non-leaf node}
\end{cases}
\end{equation*}

So we have
\begin{equation}\label{hiergrad}
\frac{\partial \ell(W)}{\partial W_{k,j}} =    \sum_{i=1}^{N} \left( \beta^{(i)}(k) - \1\left[ k\in \text{Path}(y^{(i)}) \right] \right)x^{(i)}_{j}
\end{equation}

\section{More Experimental Results}\label{more_exp}
\subsection{Performance on different values of $\lambda$}
$\lambda$ affects the sparsity of the intermediate iterates, and further the speed of the algorithm. In particular, when $\lambda$ is larger, the intermediate iterates are sparser, and then the corresponding iterations, due to the shrinking strategy, will be faster -- and vice versa. The effect of $\lambda$ on the performance is shown in this section. 

\subsubsection{Sequence Labeling}

\begin{figure} [H]
  \label{seq_result1}
  \centering
  \subfigure[Relative Objective Difference]{
    \label{seq_a1:b} 
    \includegraphics[width=2.6in]{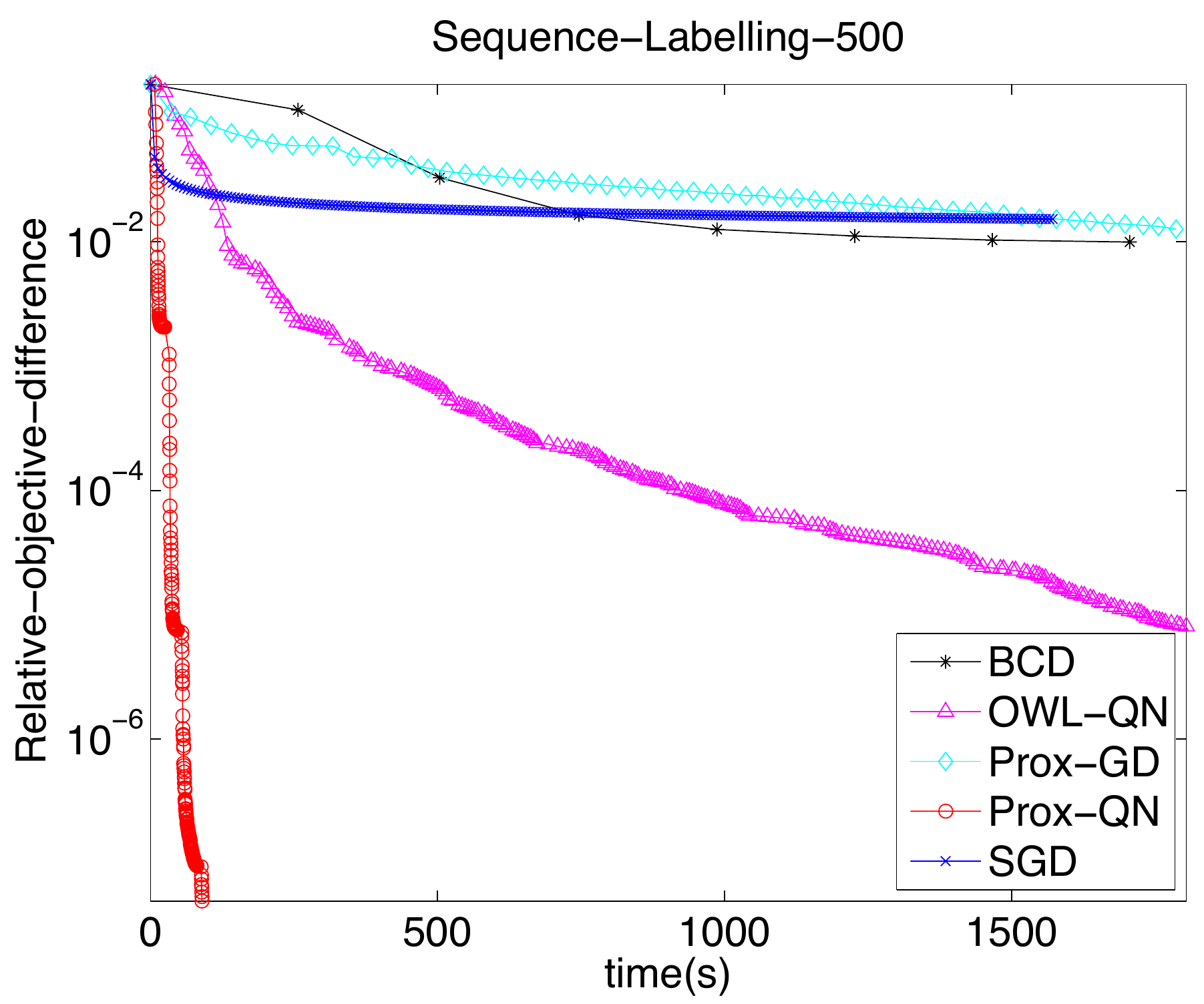}}
    \hspace{0.05in}
    \subfigure[Non-zero Parameters]{
    \label{seq_a1:c} 
    \includegraphics[width=2.6in]{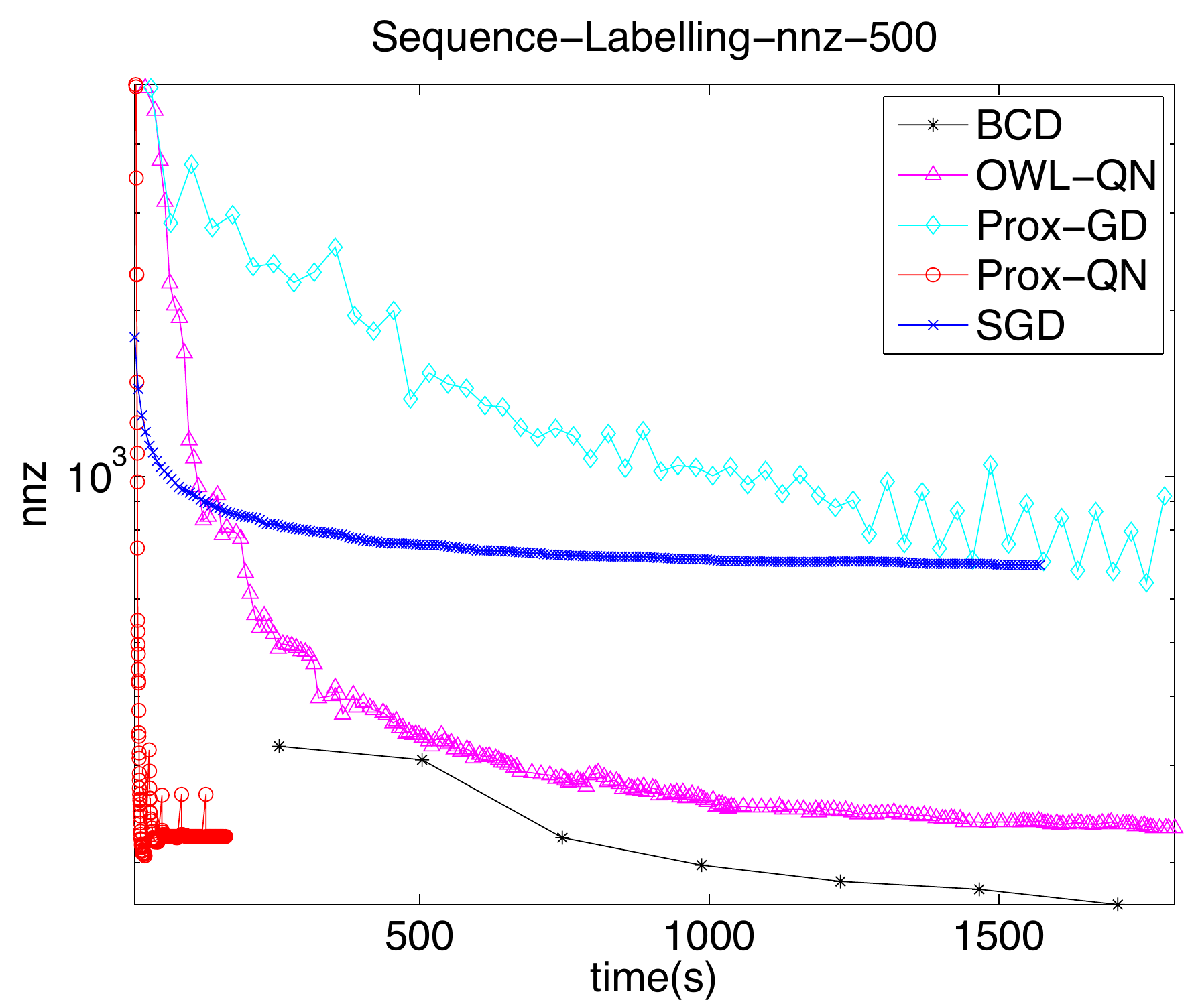}}
  \caption{Sequence Labeling Problem for $\lambda = 500$}
\end{figure}

\begin{figure} [H]
  \label{seq_result2}
  \centering
  \subfigure[Relative Objective Difference]{
    \label{seq_a2:b} 
    \includegraphics[width=2.6in]{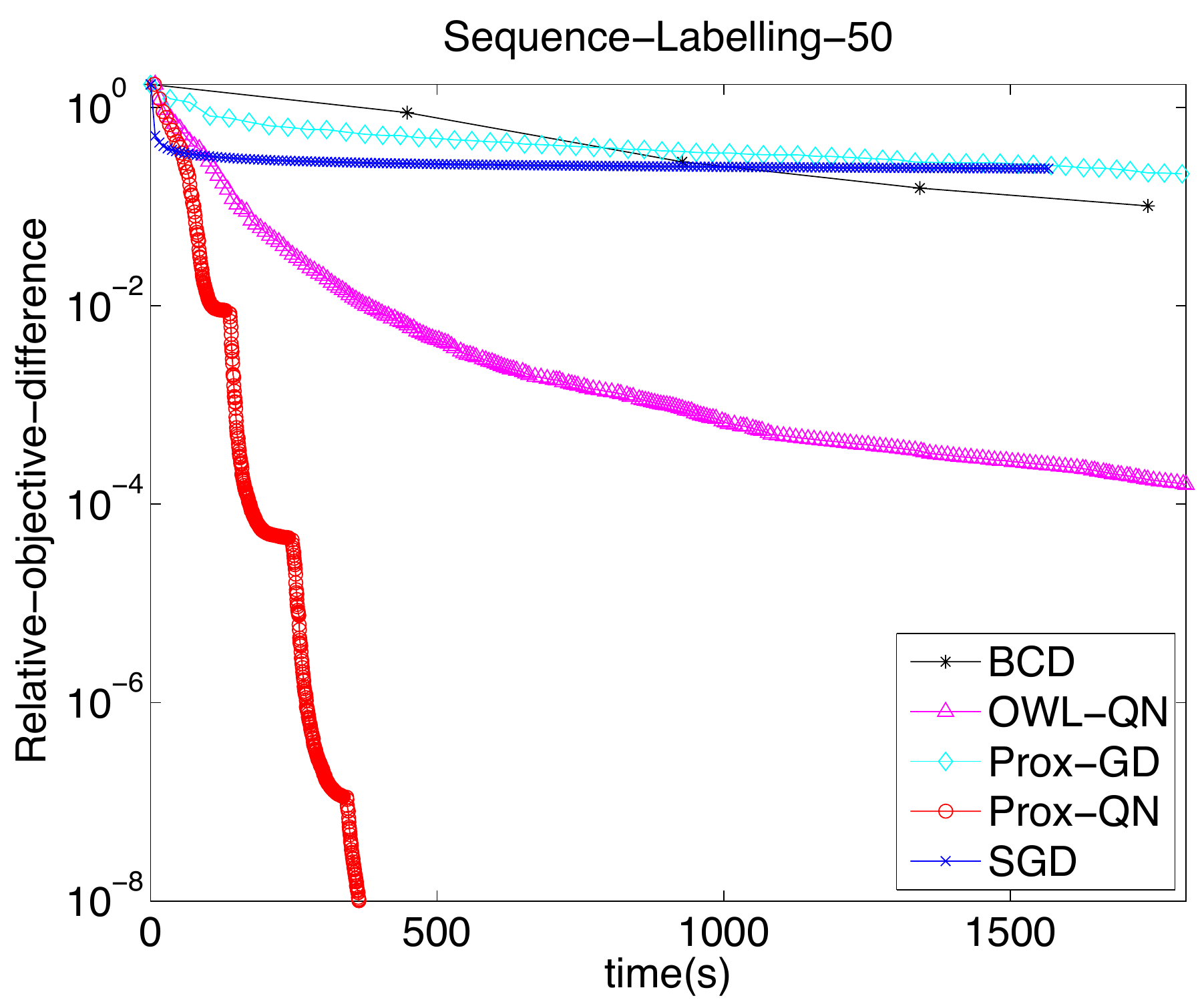}}
    \hspace{0.05in}
    \subfigure[Non-zero Parameters]{
    \label{seq_a2:c} 
    \includegraphics[width=2.6in]{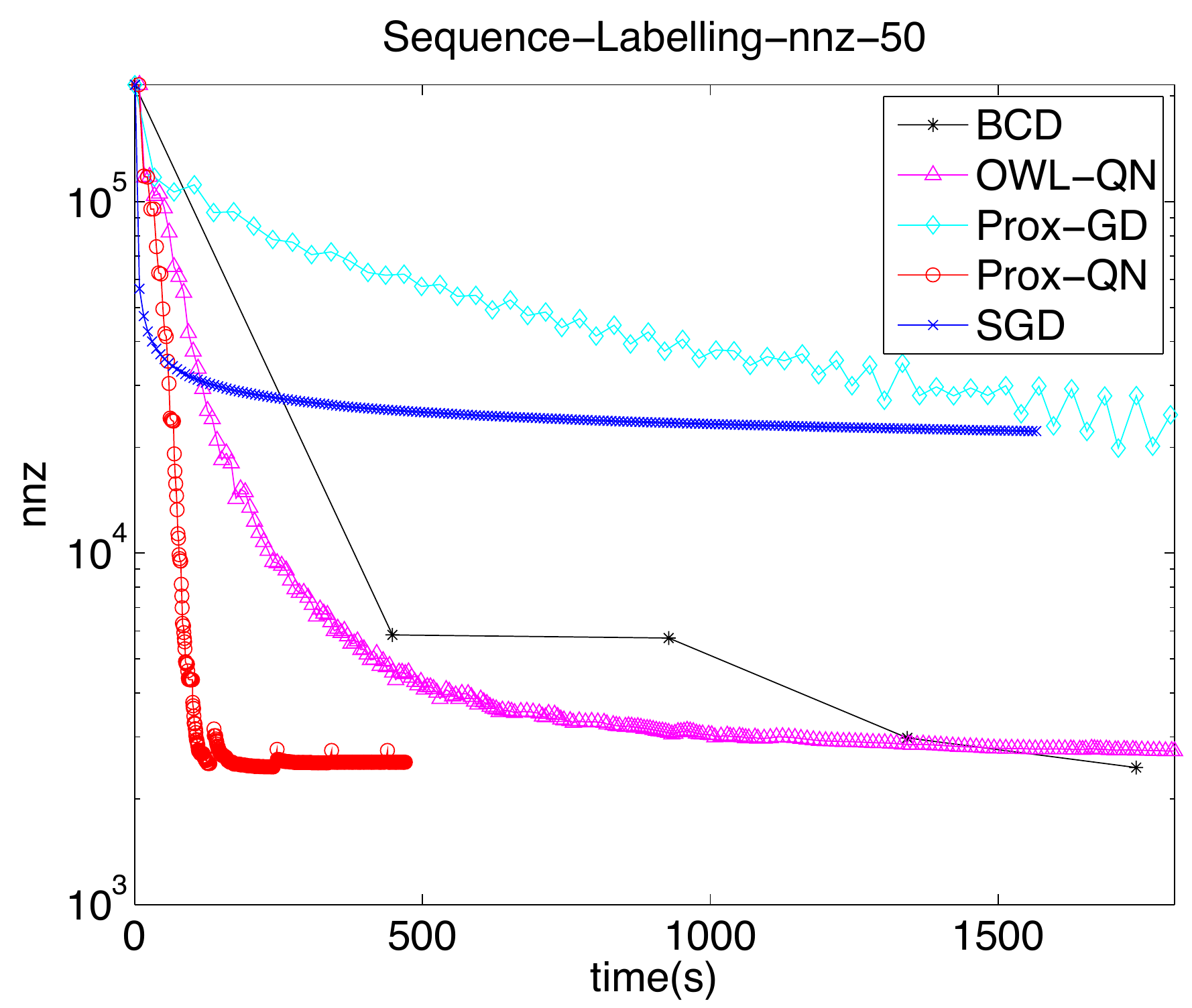}}
  \caption{Sequence Labeling Problem for $\lambda = 50$}
\end{figure}

\subsubsection{Hierarchical Classification}

\begin{figure} [H]
  \label{hier_result1}
  \centering
  \subfigure[Relative Objective Difference]{
    \label{hier_a1:b} 
    \includegraphics[width=2.6in]{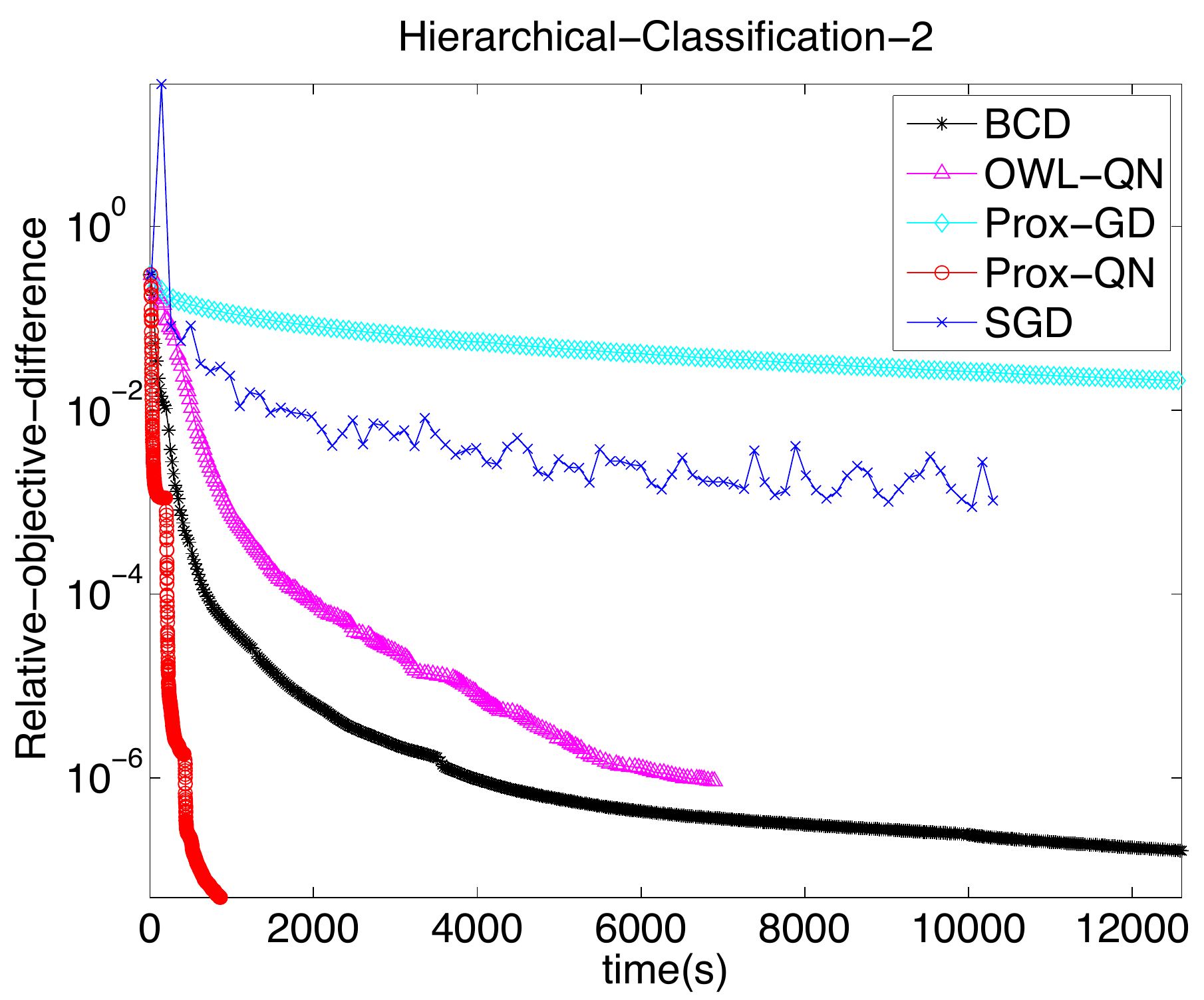}}
    \hspace{0.05in}
    \subfigure[Non-zero Parameters]{
    \label{hier_a1:c} 
    \includegraphics[width=2.6in]{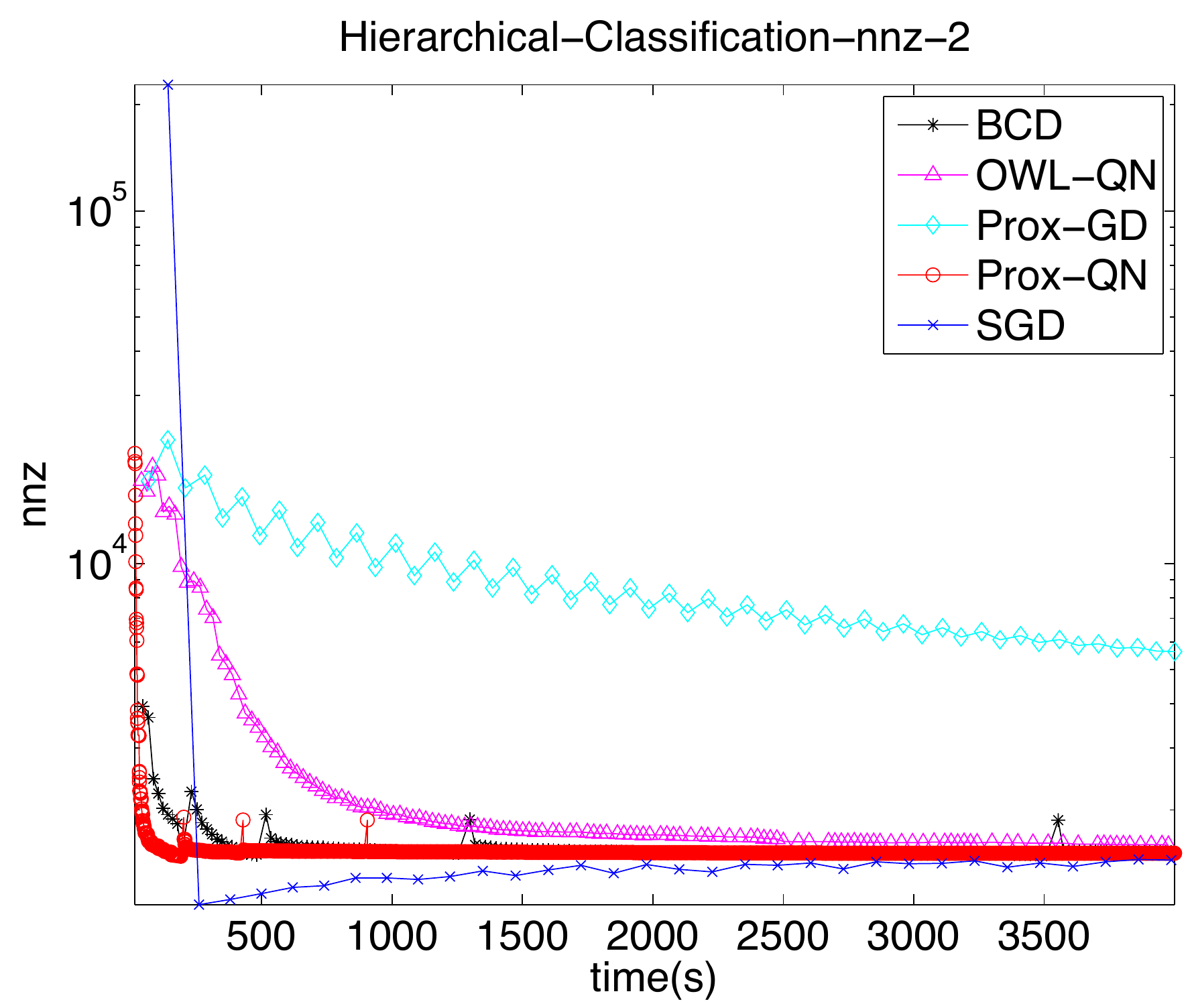}}
  \caption{Hierarchical Classification Problem for $\lambda = 2$}
\end{figure}

\begin{figure} [H]
  \label{hier_result2}
  \centering
  \subfigure[Relative Objective Difference]{
    \label{hier_a2:b} 
    \includegraphics[width=2.6in]{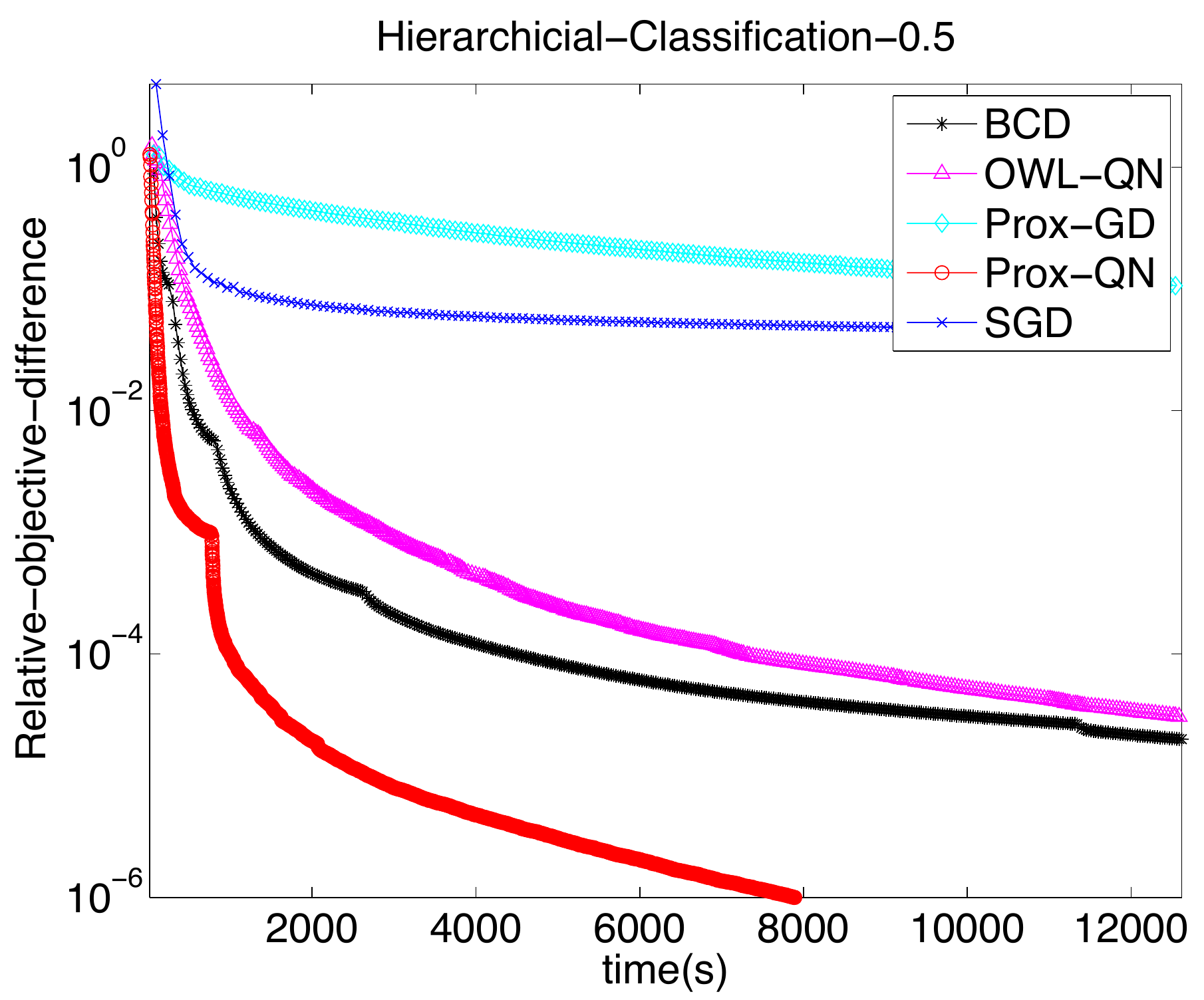}}
    \hspace{0.05in}
    \subfigure[Non-zero Parameters]{
    \label{hier_a2:c} 
    \includegraphics[width=2.6in]{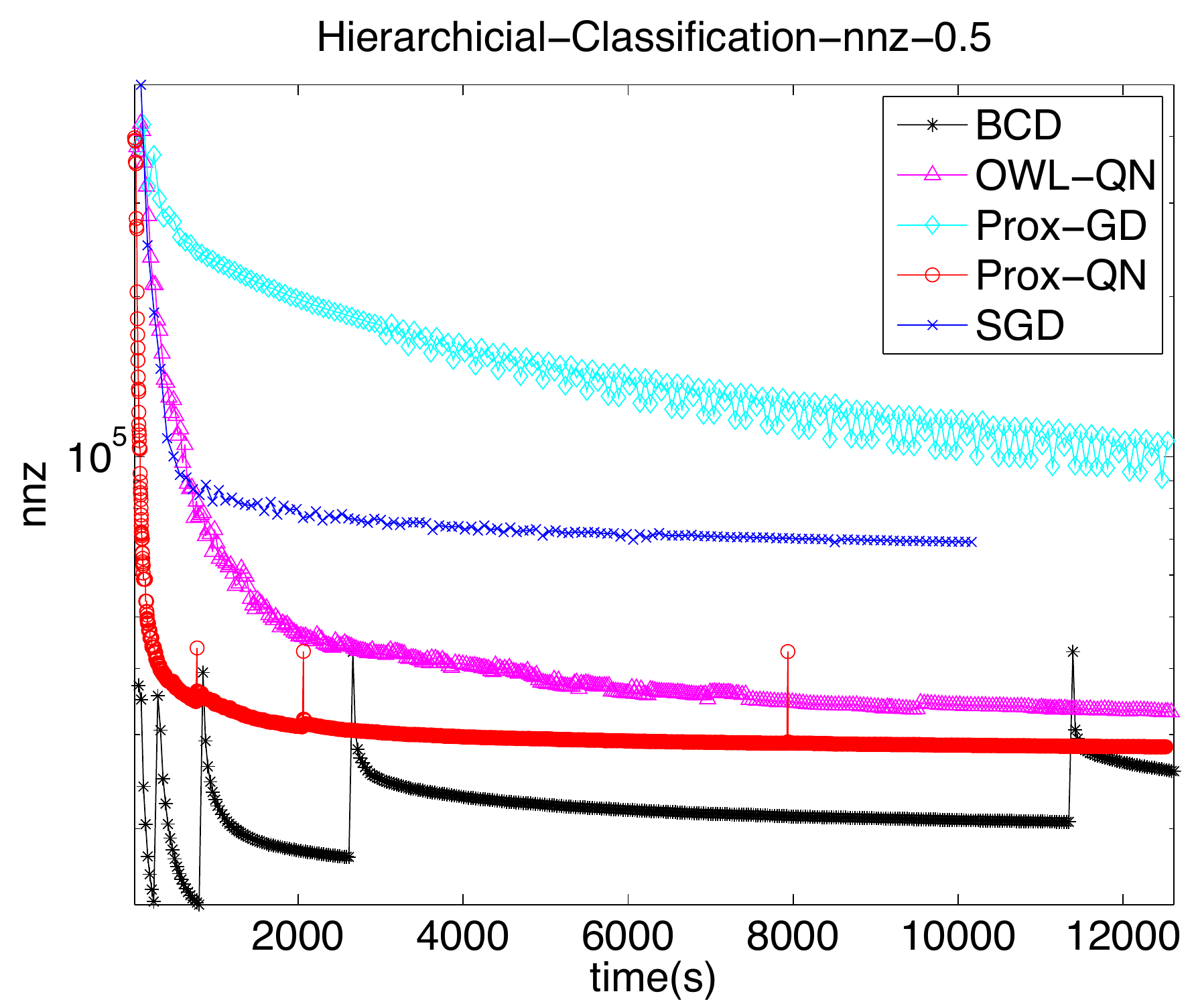}}
  \caption{Hierarchical Classification Problem for $\lambda = 0.5$}
\end{figure}

\subsection{Testing Accuracy}\label{testing_accuracy}
The testing accuracy for different $\lambda$'s on these two problems is in the following tables. The testing accuracy across training time is shown in Figure~\ref{test_accuracy}.

\begin{table}[h!]
\begin{center}
\begin{tabular}{ |c|c|c|c |}
\hline
$\lambda$  & 50  & 100 &500  \\\hline
Testing Accuracy & 0.834928 &0.736643 &0.407895 \\ \hline
nnz of optimum & 2542 & 1544 & 223 \\ \hline
\end{tabular} 
\caption{Per-character testing accuracy for OCR dataset}
\label{seq_test_rmse}
\end{center}
\end{table}

\begin{table}[h!]
\begin{center}
\begin{tabular}{ |c|c|c|c |}
\hline
$\lambda$  & 0.5  & 1 &2  \\\hline
Testing Accuracy & 0.262648 &0.249731 &0.185684 \\ \hline
nnz of optimum &  28483 & 4301 & 1505 \\ \hline
\end{tabular} 
\caption{Testing accuracy for LSHTC1 dataset}
\label{hier_test_rmse}
\end{center}
\end{table}

\begin{figure} [H]
  \centering
     \subfigure[ OCR dataset with $\lambda = 100$]{
    \includegraphics[width=2.6in]{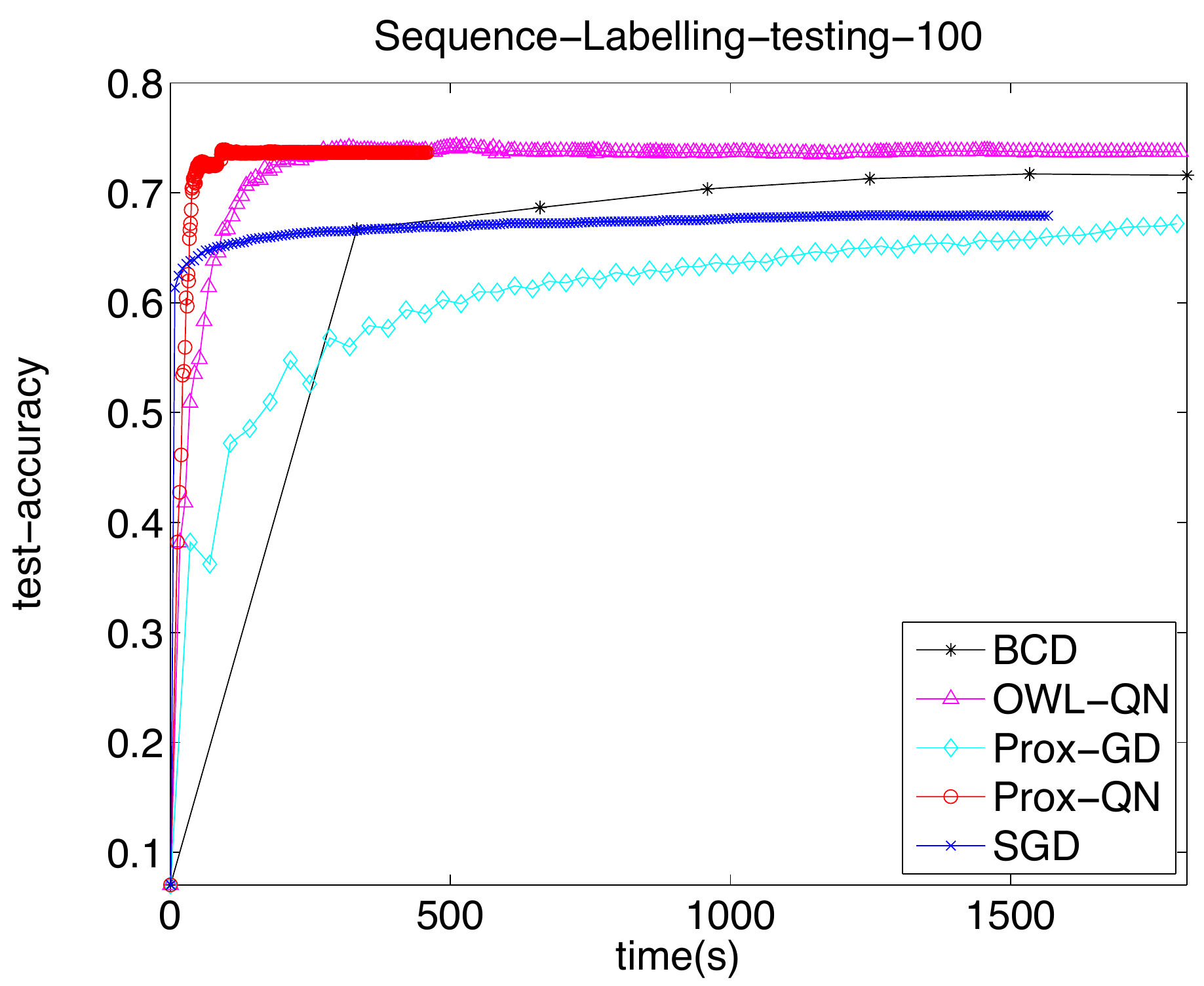}}
    \hspace{0.05in}
    \subfigure[ LSHTC1 dataset with $\lambda = 1$]{
    \includegraphics[width=2.6in]{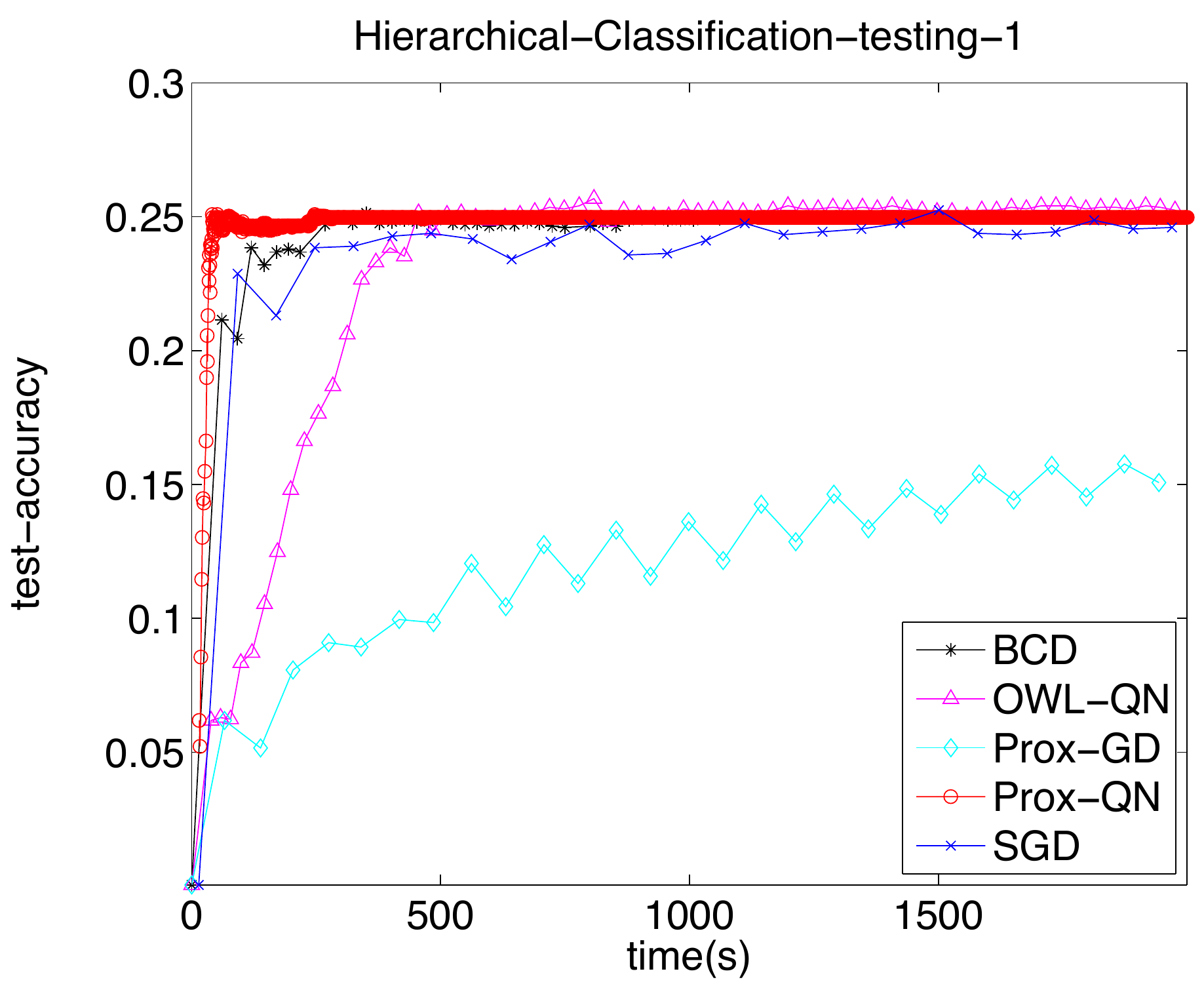}}
  \caption{Testing accuracy v.s. training time}
    \label{test_accuracy}
\end{figure}

\subsection{Relative objective difference v.s. the number of passes over dataset}
Figure~\ref{fig_num_passes} shows the performance under the measure of number of passes (iterations) over dataset. We do not include the plot for BCD method, because the pass over dataset for BCD actually depends on the sparsity pattern of the dataset. Thus it is hard to fairly define the pass over dataset for BCD.
In this experiment, shrinking strategy is not applied. \begin{figure} [H]
  \centering
    \includegraphics[width=3in]{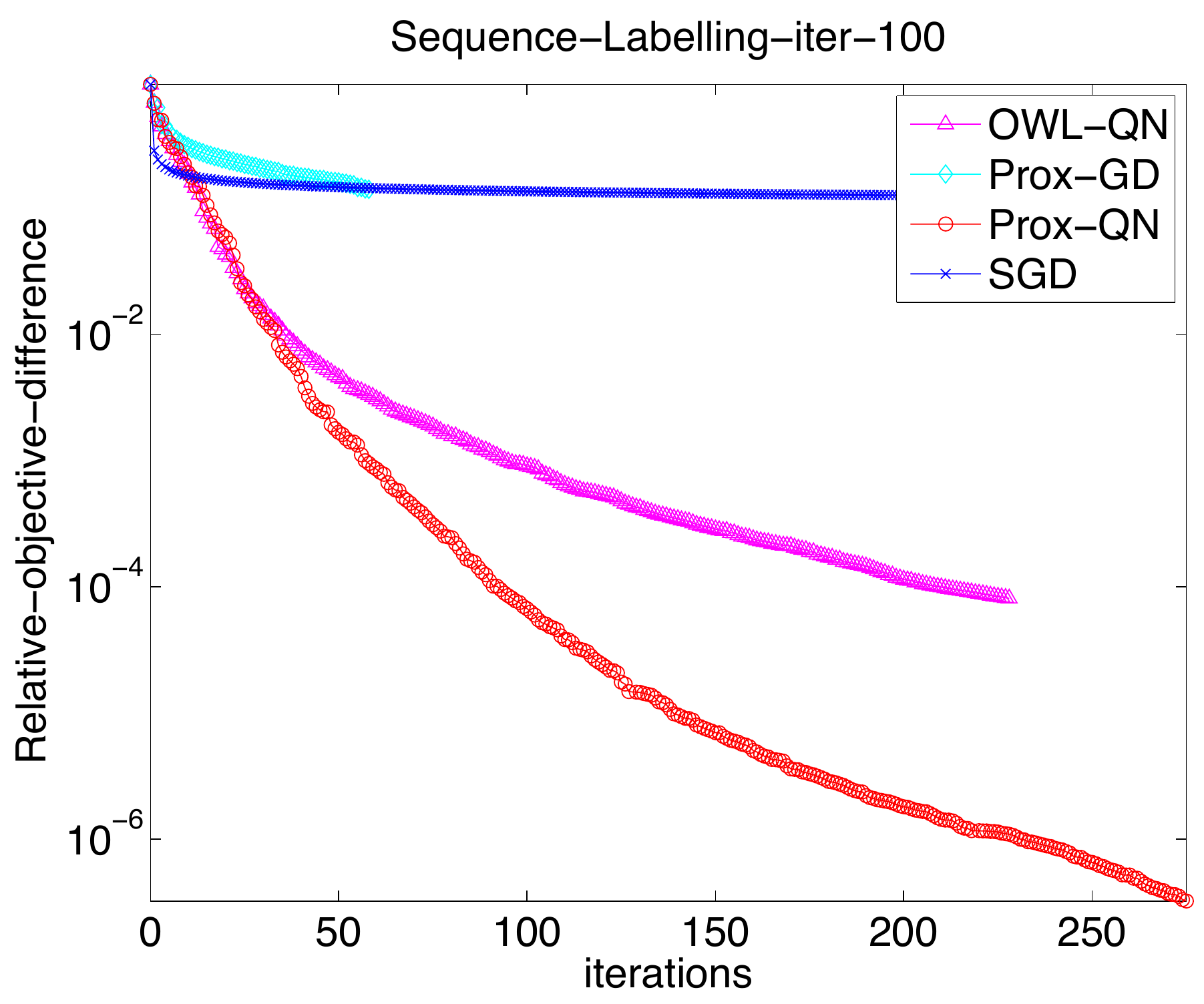}
  \caption{Relative objective difference v.s. the number of passes over dataset for OCR dataset with $\lambda = 100$. }
\label{fig_num_passes}
\end{figure}

\end{document}